\newtheorem{theorem}{Theorem}[section]
\newtheorem{lemma}[theorem]{Lemma}
\newtheorem{remark}[theorem]{Remark}
\newcommand{\Spade}{\ding{171}}    
\newcommand{\Heart}{\ding{170}}    
\newcommand{\Club}{\ding{168}}     
\setlist[itemize]{noitemsep,leftmargin=*,topsep=0pt}
\title{Reasoning Can Hurt the Inductive Abilities of Large Language Models}
\author{%
  Haibo Jin\\
  School of Information Sciences\\
  University of Illinois at Urbana-Champaign\\
  Champaign, IL 61820 \\
  \texttt{haibo@illinois.edu} \\ 
  \and
    \textbf{Peiyan Zhang} \\
  Computer Science and Engineering\\
  HKUST\\
  Clear Water Bay, Kowloon, Hong Kong\\
  \texttt{pzhangao@connect.ust.hk} \\
  \and
  \textbf{Man Luo} \\
    Research Scientist, Intel Labs \\
  Santa Clara, CA 95054 \\
  \texttt{man.luo@intel.com} \\  \and
  \textbf{Haohan Wang}\thanks{Corresponding Author} \\
  School of Information Sciences\\
  University of Illinois Urbana-Champaign\\
  Champaign, IL 61820 \\
  \texttt{haohanw@illinois.edu} \\}
\begin{document}

\maketitle

\begin{abstract}


Large Language Models (LLMs) have shown remarkable progress across domains, yet their ability to perform inductive reasoning—inferring latent rules from sparse examples—remains limited. 
It is often assumed that chain-of-thought (CoT) prompting, as used in Large Reasoning Models (LRMs), enhances such reasoning. 
We investigate this assumption with creating four controlled, diagnostic game-based tasks—chess, Texas Hold’em, dice games, and blackjack—with hidden human-defined rules. 
We find that CoT reasoning can degrade inductive performance, with LRMs often underperforming their non-reasoning counterparts.

To explain this, we present a theoretical framework that reveals how reasoning steps can amplify error through three failure modes: incorrect sub-task decomposition, incorrect sub-task solving, and incorrect final answer summarization. 
Based on our theoretical and empirical analysis, we introduce structured interventions that adapt CoT generation according to our identified failure types. These interventions improve inductive accuracy without retraining. Our findings suggest that effective (CoT) reasoning depends not only on taking more steps but also on ensuring those steps are well-structured.
\end{abstract}

\section{Introduction}

Inductive reasoning—inferring latent rules from sparse examples—is a key capability underlying generalization. While Large Language Models (LLMs)~\cite{achiam2023gpt, yang2024qwen2} have made significant advances, they continue to struggle in tasks that require structured inference under uncertainty~\cite{cai2024role}. These models often rely on surface-level pattern recognition and static prompt formats~\cite{marcus2020next}, making them brittle when confronted with novel or structurally complex problems~\cite{nezhurina2024alice}.


To address this limitation, recent models incorporate explicit reasoning mechanisms—particularly chain-of-thought (CoT) prompting~\cite{wei2022chain}—to enable multi-step inference. These Large Reasoning Models (LRMs)~\cite{guo2025deepseek, jaech2024openai} have shown improved performance in coding, logical inference, and scientific tasks~\cite{patil2025advancing, zhang2025unveiling}, and CoT has been widely adopted under the assumption that structured reasoning enhances inductive performance~\cite{zhou2022least, yao2023tree, meng2024dcr}. However, emerging evidence complicates this assumption: Wu et al.~\cite{wu2025more} observe that longer reasoning traces can degrade accuracy, suggesting a non-monotonic relationship between reasoning depth and performance.

\begin{figure}[htbp]
\centering
\includegraphics[width=\linewidth]{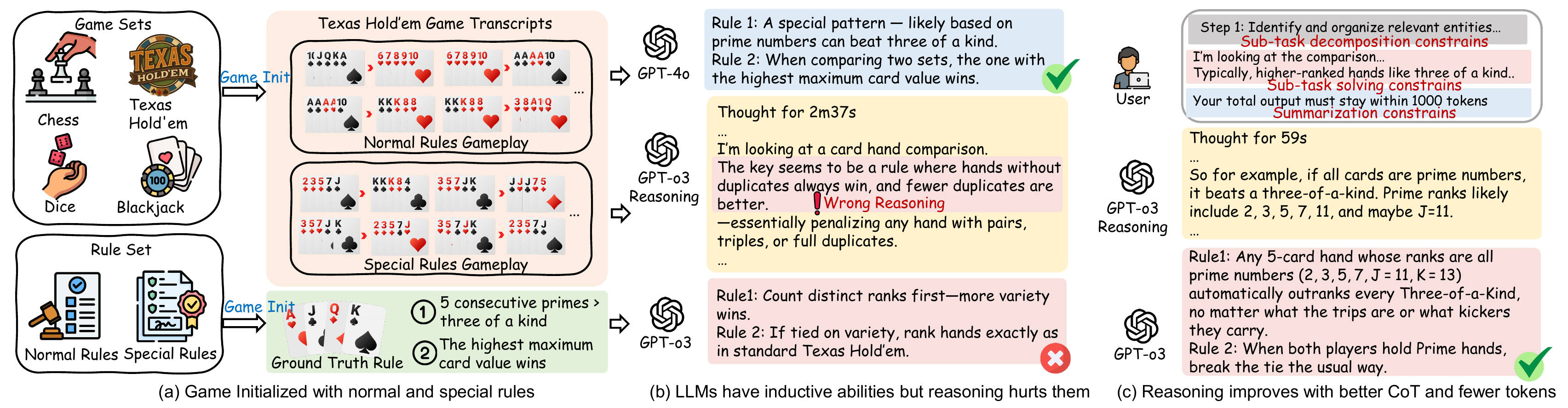}
\caption{Examples illustrating inductive reasoning on gameplay transcripts. (a) Games begin with both Normal and hidden Special Rules, requiring models to infer latent constraints from observed plays. (b) LLMs can induce rules like card legality and win conditions without explicit guidance, but LRMs such as GPT-o3 may underperform due to misaligned or noisy reasoning. (c) Reasoning improves when guided at the decomposition, solving, and summarization stages.
}
\label{intro_example}
\end{figure}

In this work, we investigate the inductive performance of LLMs and LRMs (Fig.~\ref{intro_example}). 
We introduce a set of controlled diagnostic game-based tasks to isolate inductive reasoning behavior in LLMs. In each task, models are presented with a short transcript of gameplay—without access to the underlying rules—and must infer the latent constraints governing legal moves and outcomes. Surprisingly, we find that LRMs often underperform non-reasoning LLMs in these settings, suggesting that CoT reasoning may introduce noise rather than clarity. We also develop a theoretical framework that explains this degradation. 

Our work address the following questions: \textbf{RQ1:} How well do LLMs perform on inductive reasoning tasks, and has this improved with recent models? (Section~\ref{sec:3}) \textbf{RQ2:} Why does reasoning sometimes fail—or even hurt—inductive performance? (Section~\ref{sec:4}) \textbf{RQ3:} How can we guide reasoning to enhance inductive accuracy without model retraining? (Section~\ref{sec:5})

Along answer these RQs, we have made the following contributions.
\begin{itemize}
  \item We construct four controlled diagnostic game-based tasks with hidden human-defined rules to evaluate the inductive reasoning abilities of eight leading LLMs, including both non-reasoning models and LRMs.
  \item We provide a theoretical and empirical analysis identifying three failure modes that explain degraded reasoning: incorrect sub-task decomposition, incorrect sub-task solving, and incorrect final answer summarization.
  \item We propose an error-guided intervention method that adapts CoT generation based on predicted failure types, yielding consistent improvements in inductive accuracy without retraining.
\end{itemize}


\section{Related Work}

\textbf{Chain-of-Thought Reasoning.}
Large language models can perform complex tasks more effectively when guided through intermediate reasoning steps, a process known as chain-of-thought (CoT) prompting~\cite{wei2022chain}. Much recent work has aimed to strengthen CoT-based reasoning through better decomposition, exploration, or decision-making strategies. Zhou et al.~\cite{zhou2022least} introduce least-to-most prompting, which breaks problems into simpler subproblems and solves them sequentially to support easy-to-hard generalization. Yao et al.~\cite{yao2023tree} propose Tree of Thoughts, enabling multi-path exploration with backtracking and self-evaluation. Meng et al.~\cite{meng2024dcr} introduce a Divide-and-Conquer strategy that improves performance by segmenting input into reasoning chunks, and Zhang et al.~\cite{zhang2024examination} evaluate when such strategies yield consistent gains. While these methods aim to enhance reasoning capabilities, they largely assume that reasoning depth is beneficial and do not investigate when CoT may fail.

\textbf{Mechanisms Behind CoT Reasoning.}
Several studies seek to formalize the mechanisms underlying CoT prompting. Feng et al.~\cite{feng2023towards} use circuit complexity to show that constant-size models can simulate deeper computation via CoT derivations. Li et al.~\cite{li2024chain} explain CoT's benefits in decoder-only Transformers via sequential computation theory, and Cui et al.~\cite{cui2024theoretical} show that Coherent CoT improves error correction over stepwise prompting but increases sensitivity to intermediate-step failures. Ton et al.~\cite{ton2024understanding} apply information theory to quantify stepwise information gain, and Li et al.~\cite{li2024happened} analyze fast vs. slow thinking during post-training. Ye et al.~\cite{ye2024physics} investigate CoT in a controlled math setting, while Wu et al.~\cite{wu2025more} demonstrate that performance may peak and then decline with longer reasoning traces, hinting at an optimal CoT length. However, while these works identify performance degradation empirically, they stop short of explaining why it arises. Our work builds on these insights by offering a formal theory of reasoning failure.


\textbf{Inductive reasoning.}
Inductive reasoning~\cite{collerton2010wiley, heit2000properties} is a core component of human intelligence, enabling rule generalization from examples without prior knowledge. As LLMs reach human-level performance, many benchmarks have been developed to assess this ability. Banatt et al.~\cite{banatt2024wilt} propose \textsc{WILT}, a logic induction benchmark where models infer Boolean rules through trial-and-error queries.  
Li et al.~\cite{li2024mirage} design \textsc{MIRAGE}, a suite of synthetic analogy tasks that test pattern-based generalization. Ma et al.~\cite{ma2024kor} introduce KoR-Bench, which emphasizes rule application across logic puzzles, ciphers, and counterfactual reasoning. Xiao et al.~\cite{xiao2024logicvista} present LogicVista, focusing on visual-spatial rule inference via figure completion tasks. Xu et al.~\cite{xu2023llms} develop LLM-Script, where models must induce latent functional patterns from input--output examples. Yan et al.~\cite{yan2025mir} introduce MIR-Bench, the first benchmark for many-shot inductive reasoning, where models must induce latent functions from diverse input–output exemplars presented in long-context settings.

\textbf{Key Differences.} Existing benchmarks are typically static, using simplified symbolic or visual inputs to test whether models can apply hidden rules to new examples. In contrast, our game-based setup asks models to infer the rule itself from minimal gameplay context, without a separate testing phase. This more closely mirrors human induction, which relies on observing outcomes and forming hypotheses rather than answering predefined queries.



\section{Inductive Reasoning Evaluation via Gameplay Tasks}\label{sec:3}
To investigate how well LLMs perform in inductive reasoning tasks (RQ1), we introduce four different types of controlled diagnostic game-based tasks designed to challenge and measure their inductive ability to infer hidden rules from gameplay transcripts. These games span a diverse range of domains, including chess, Texas Hold’em, dice games, and blackjack. Each game consists of two rule components: a \textbf{normal rule} component (abbreviated as ``\textbf{NR}'') that mirrors standard gameplay conventions, and a \textbf{special rule} component (abbreviated as ``\textbf{SR}'') that introduces manually designed hidden constraints. Together, these components form a composite logic that governs legality, winning conditions, or valid moves. Crucially, these rules are not revealed to the models; instead, the models are presented with a few legal gameplay examples and are required to induce the underlying rules through observation.

\subsection{Game Setup}
\textbf{Models}. We evaluate eight language models, including four non-reasoning LLMs—GPT-4o (\texttt{gpt-4o-2024-08-06})~\cite{achiam2023gpt}, DeepSeek-V3 (\texttt{deepseek-v3-2025-03-25})~\cite{liu2024deepseek}, Qwen2.5-Max (\texttt{qwen-max-2025-01-25})~\cite{qwen2.5}, and Grok-2 (\texttt{grok-2-1212})~\cite{grok2_xai_2024}—as well as four LRMs: GPT-o3 (\texttt{o3-2025-04-16}), DeepSeek-R1 (\texttt{deepseek-r1-32b})~\cite{guo2025deepseek}, QwQ (\texttt{QwQ-32B})~\cite{qwq32b}, and Grok-3-Mini (\texttt{grok-3-mini-beta})~\cite{grok3_xai_2024}. We run DeepSeek-R1 and QwQ locally to enable collection of intermediate reasoning traces for deeper analysis in RQ2, while the remaining models are accessed via their respective APIs.

\textbf{Implementation Details.} To evaluate each model’s inductive performance across rule types, we frame the task as a semantic alignment problem—determining whether the model's induced rule captures the same logic as the ground-truth. This formulation is supported by recent work showing that large language models can reliably perform semantic evaluation and alignment tasks~\cite{zheng2023judging}. We therefore use GPT-4o (\texttt{gpt-4o-2024-08-06})\cite{achiam2023gpt} as a reference judge. For each case, GPT-4o is queried three times to independently assess whether the induced rule is consistent with the ground-truth, and a majority vote is used to finalize the decision. Rule-level accuracy is computed as the proportion of cases judged as consistent. Representative examples are provided in Appendix~\ref{appendix:gptjudge}, and the prompt for GPT-4o is provided in Appendix~\ref{prompt-for-judege}.

\subsection{Rule Design}

We evaluate inductive reasoning in LLMs using four game-based tasks: chess, Texas Hold’em, dice games, and blackjack. Each task presents models with gameplay transcripts governed by two types of rules: normal (NRs) and special (SRs). The model must infer these rules purely from observation.

\paragraph{Case Study: Chess.} 
We use chess as a detailed example due to its well-defined rules and spatial structure. In each game instance, we define eight types of pieces, each assigned one rule: either from a pool of normal rules (NRs) or special rules (SRs). 
Normal rules are based on standard chess movements, such as moving one square in any direction or diagonally across the board. Special rules are designed to be legal but non-obvious, introducing hidden constraints that models must discover from context. For example, a piece may be assigned a rule like “move in a straight line any number of squares, then shift diagonally by one square.” This rule differs subtly from standard movement and cannot be inferred from a single move. 

Models are never given any rule descriptions—they are shown short gameplay transcripts (typically 10–12 moves) and asked to determine what rules govern the movements. We randomize the board size (ranging from 8 to 15), shuffle rule assignments across games, and ensure each piece appears at least three times per episode. 
We construct 225 distinct transcripts by enumerating all combinations of four NRs and four SRs across the six available options. This design provides a broad set of inductive scenarios with controlled structure and consistent constraints. Full rules and prompt formats are in Appendix~\ref{appendix:rule-details} and~\ref{Prompts-for-LLMs}.

\paragraph{Other Games.} 
We include three additional games to cover a broader range of inductive reasoning scenarios, each characterized by distinct rule structures and abstraction types. Full rule specifications for all games are provided in Appendix~\ref{appendix:rule-details}:

\begin{itemize}
  \item \textbf{Texas Hold’em:} Tests symbolic ranking under hidden structure. While normal rules follow standard poker hand rankings, SRs redefine hand strength using patterns like “five consecutive prime numbers”. 

  \item \textbf{Dice Games:} Adapted from Sic Bo to assess reasoning in noisy environments. NRs involve numeric thresholds (e.g., small vs. large totals), while SRs introduce subtle structural overrides—e.g., “a triple of prime numbers beats all other hands”.

  \item \textbf{Blackjack:} Focuses on threshold logic with rule exceptions. NRs include standard conditions like ``busts lose'' and ``21 wins,'' while SRs add constraints such as “hands with three-card straight flushes win automatically”.
\end{itemize}

\subsection{Inductive Abilities Analysis}




Fig.~\ref{fig:inductive_games} shows rule-wise inductive accuracy across eight models in four games. Although designed to enhance multi-step reasoning, models with reasoning capabilities consistently perform worse than their non-reasoning counterparts on special rules—a pattern observed across all domains.

\begin{figure}[htbp]
    \vspace{-5pt}
    \centering
    \includegraphics[width=\textwidth]{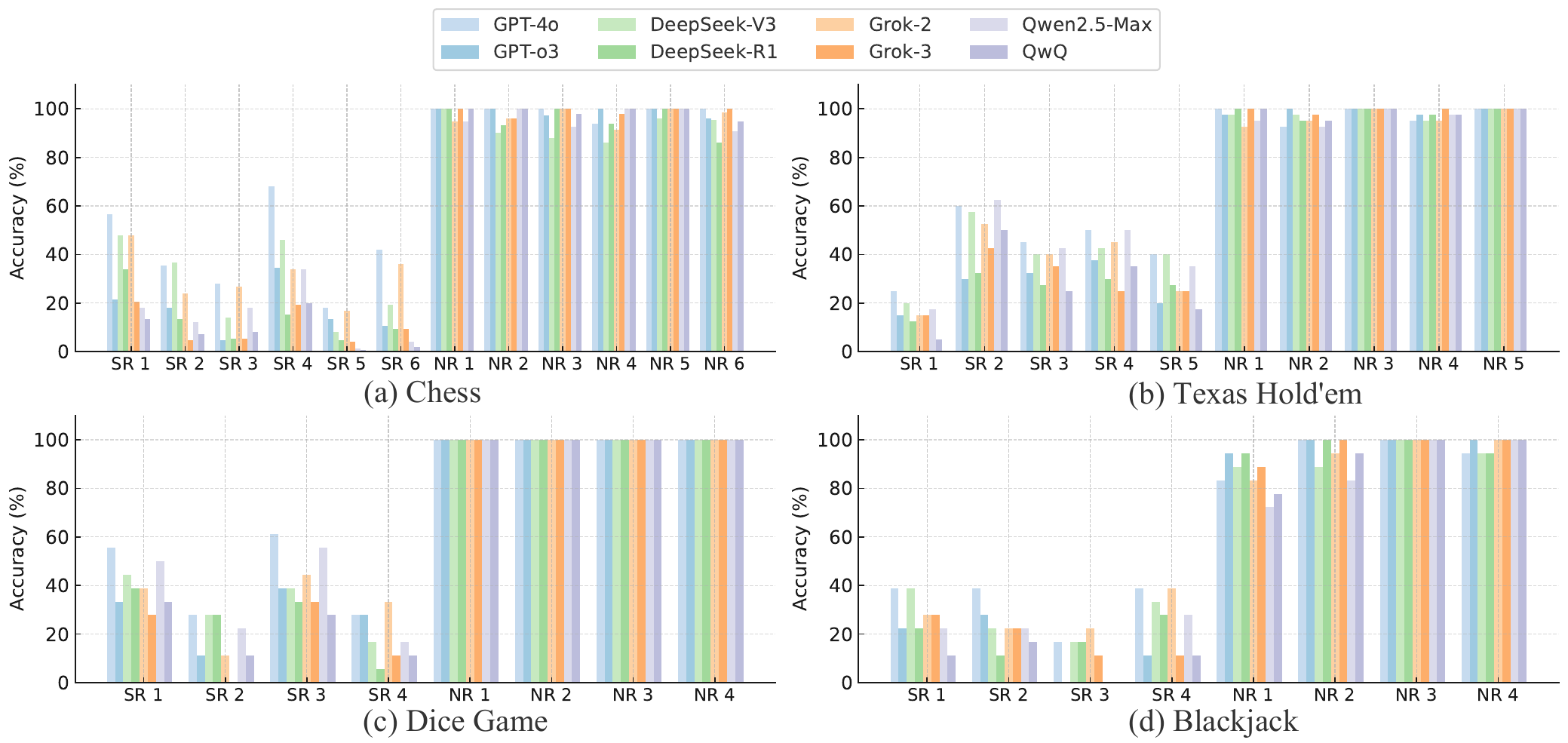}
    \caption{Inductive accuracy on normal rules (NRs) and special rules (SRs) across four games. Each bar shows rule-wise inductive performance for eight LLMs. While most models achieve high accuracy on NRs, reasoning models (lighter bars) consistently underperform non-reasoning models (darker bars) on SRs, indicating that current reasoning may hurt inductive abilities on hidden rules.}
    \label{fig:inductive_games}
        \vspace{-5pt}
\end{figure}

On normal rules, most models exceed 90\% accuracy, indicating strong pattern recognition when the rule is surface-aligned or structurally obvious. In contrast, performance on special rules drops significantly. For example, in chess, non-reasoning models like GPT-4o and DeepSeek-V3 reach 55–65\% on SR1, while their reasoning counterparts fall below 25\%. Similar gaps appear in Texas Hold’em, Dice, and Blackjack (Fig.~\ref{fig:inductive_games}).

These results show that reasoning models struggle more with exception-based or hidden rules. This suggests that their multi-step traces may not help—and can introduce incorrect assumptions or misleading intermediate steps. We examine this hypothesis in detail in RQ2 by analyzing the reasoning outputs directly.


\begin{tcolorbox}[colback=gray!5!white, colframe=gray!75!black, boxrule=0.4pt, sharp corners]
\textbf{RQ1: How well do LLMs perform on inductive reasoning tasks, and has this improved with recent models?}

\textbf{Answer:} Non-reasoning LLMs consistently outperform reasoning-enabled models on inductive tasks with hidden rules, showing that recent reasoning strategies degrade performance on abstraction beyond surface patterns.
\end{tcolorbox}

\section{Root Causes of Reasoning Failures}\label{sec:4}
To understand why reasoning sometimes fails to improve—and in some cases even harms—the inductive performance of LRMs (RQ2), we conduct both theoretical and empirical analyses to identify the root causes of these failures.

\subsection{Theoretical Analysis of Reasoning Errors}
\label{sec:theory_setup}
To explain why reasoning can fail, we present a theoretical framework that models chain-of-thought reasoning as a sequence of discrete operations: \emph{posing} a sub-task, \emph{solving} it, and \emph{summarizing} the final answer. Each step \(k \in \mathbb{N}\) is associated with a reasoning state:
\begin{equation}
  x_k = (m_k,\;s_k) \in \underbrace{\mathbb{R}^d}_{\text{belief state}} \times \underbrace{\{\textsc{NeedQ},\textsc{NeedA},\textsc{Finish}\}}_{\text{reasoning mode}},
\end{equation}
where \(m_k\) denotes the model's current belief about the correct answer \(y^\star \in \mathbb{R}^d\), and \(s_k\) tracks the stage of reasoning. 
Detailed mode semantics are provided in Appendix~\ref{appendix:mode-semantics}.

\textbf{Evidence model.}
The model does not observe the true target \( y^\star \) directly. Instead, at each reasoning step \(k\), it receives an indirect evidence vector \(g_k\) based on its attempted sub-task resolution. We assume the evidence signal follows:
\begin{equation}
  g_k = \alpha_k (y^\star - m_{k-1}) + \varepsilon_k,
  \qquad \varepsilon_k \sim \mathcal{N}(0, \sigma^2 I_d)
  \label{eq:evidence_main}
\end{equation}
with the following interpretation:
\begin{itemize}
  \item \textbf{Task alignment (\(\alpha_k\)).} The scalar \(\alpha_k \in [-1,1]\) represents how well the current sub-task focuses on the relevant latent structure. This captures an observed behavior in LLM reasoning: incorrect or ambiguous intermediate questions can derail downstream answers by misdirecting attention~\cite{li2024chain, ye2024physics}. Negative alignment (\(\alpha_k < 0\)) corresponds to misleading questions—such as proposing irrelevant concepts—which have been shown to increase failure rates in multi-hop reasoning~\cite{bhuiya2024seemingly,biran2024hopping}. The scalar form abstracts this variable attention quality while keeping the model analytically tractable.
  
  \item \textbf{Answer noise (\(\varepsilon_k\)).} The residual \(\varepsilon_k\) models stochastic variation in sub-task resolution, including token sampling noise, hallucinations, and unstable decoding paths. Prior work has empirically shown that even when CoT traces are well-posed, decoding introduces variability that degrades reliability~\cite{madaan2023selfrefine, cobbe2021training}. 
\end{itemize}

\paragraph{Belief update and error propagation.}  
At each reasoning step, the model updates its internal belief by integrating a new evidence signal:
\begin{equation}
  m_k = m_{k-1} + \gamma_k g_k,
  \label{eq:update_main}
\end{equation}
where \(m_k \in \mathbb{R}^d\) denotes the model’s current belief about the target solution \(y^\star\), and \(\gamma_k \in (0,1)\) is a step-size scalar that controls how much the new evidence shifts the belief. This form is justified by two observations.

\begin{itemize}
  \item \textbf{Incremental refinement.} Empirical analyses show that LLMs often revise answers through local adjustments rather than full restatements~\cite{madaan2023selfrefine}. The additive form captures this bounded update behavior, consistent with chain-of-thought reasoning where successive steps refine a hypothesis instead of discarding prior context.
  
    \item \textbf{Step-size heterogeneity.} The weight \(\gamma_k\) reflects the model's implicit confidence, leading to varying step sizes~\cite{turpin2023language,jin2024impact}, as reasoning trajectories should not integrate all steps equally.

\end{itemize}

Subtracting \(y^\star\) from both sides, we define the belief error \(e_k := m_k - y^\star\) and obtain the recursion:
\begin{equation}
  e_k = (1 - \gamma_k \alpha_k) e_{k-1} - \gamma_k \varepsilon_k,
  \label{eq:error_recursion_main}
\end{equation}

\textbf{Emergence of reasoning errors.}
Eq.~\eqref{eq:error_recursion_main} helps us to decompose the error into three error components:

\textbf{(1) Incorrect sub-task decomposition (Breakdown Error, \(\alpha_k\)).} 
Breakdown errors directly correspond to poor question alignment (\(\alpha_k\approx0\)) or negative alignment (\(\alpha_k<0\)). In such scenarios, the coefficient \((1 - \gamma_k\alpha_k)\) magnifies or maintains the previous error magnitude \(\|e_{k-1}\|\), thus preventing error reduction or even causing divergence. 
(Detailed analysis can be found in Appendix~\ref{appendix:error-recursion}.)

\textbf{(2) Incorrect sub-task solving (Solving Error, \(\varepsilon_k\)).} 
Even under optimal question alignment (\(\alpha_k \approx 1\)), the inherent answer-generation noise \(\varepsilon_k\) introduces stochastic deviations at each reasoning step. 
(Detailed analysis can be found in Appendix~\ref{appendix:error-recursion}.)

\textbf{(3) Incorrect final answer summarization (Summary Error).}
The third class of reasoning error arises from deciding when to stop the reasoning process and commit to a final answer. This halting decision determines the total number of reasoning steps \(N\), which directly affects the model’s prediction \(\hat{y}_N := m_N\). We analyze the resulting prediction error by computing the expected squared deviation from the true target \(y^\star\in\mathbb{R}^d\). 
Let \(e_N := m_N - y^\star\) denote the final error. Then the expected error is given by:
\begin{equation}\label{eq:expected_error_main}
    \mathcal{E}(N) = \mathbb{E}\|e_N\|^2 = b_0\prod_{i=1}^{N}(1 - \gamma_i\bar{\alpha})^2 + \sigma^2\sum_{i=1}^{N}\gamma_i^2\prod_{j=i+1}^{N}(1 - \gamma_j\bar{\alpha})^2 + \Delta(N)
\end{equation}
where \(b_0 = \|m_0 - y^\star\|^2\) is the initial squared error, \(\gamma_i \in (0,1)\) is the integration weight at step \(i\), and \(\bar{\alpha} := \mathbb{E}[\alpha_k]\) is the expected question alignment. The product terms reflect accumulated error contraction across steps, and \(\Delta(N) \ge 0\) accounts for additional variance due to misalignment variability. A full derivation of Eq.~\eqref{eq:expected_error_main} is provided in Appendix~\ref{appendix:closed-form-error}.

\begin{theorem}[Optimal Reasoning Length Exists]
Let the expected mean‑squared error after $N$ Answer defined in Eq~\ref{eq:expected_error_main}.
Then:
\begin{enumerate}
\item\label{item:main}
      $\mathcal E(N)$ is strictly decreasing for
      $N< N^\star$ and strictly increasing for $N> N^\star$,
      with
      \[
        N^\star
        \;:=\;
        \min\!\bigl\{N\ge0:\;
             \mathcal E(N{+}1)>\mathcal E(N)\bigr\};
      \]
\item\label{item:unique1}
      the minimiser $N^\star$ is unique.  Consequently,
      $\mathcal E(N)$ is U‑shaped when plotted against reasoning depth
      $N$.
\end{enumerate}
\label{thm:main}
\end{theorem}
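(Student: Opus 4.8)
The plan is to analyze the discrete forward difference $D(N) := \mathcal{E}(N+1) - \mathcal{E}(N)$ and show it changes sign exactly once, from negative to positive, as $N$ increases. First I would write $\mathcal{E}(N)$ in a cleaner recursive form: setting $c_N := (1-\gamma_{N+1}\bar\alpha)^2 \in (0,1)$, one checks that ignoring the $\Delta(N)$ term for a moment (or assuming $\Delta$ obeys a compatible recursion, which I would need to state as a hypothesis — see below), $\mathcal{E}(N+1) = c_N \mathcal{E}(N) + \gamma_{N+1}^2 \sigma^2$. From this, $D(N) = \mathcal{E}(N+1) - \mathcal{E}(N) = (c_N - 1)\mathcal{E}(N) + \gamma_{N+1}^2\sigma^2 = -\bigl(2\gamma_{N+1}\bar\alpha - \gamma_{N+1}^2\bar\alpha^2\bigr)\mathcal{E}(N) + \gamma_{N+1}^2\sigma^2$. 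The sign of $D(N)$ is thus governed by comparing $\mathcal{E}(N)$ against the threshold $\theta_N := \gamma_{N+1}^2\sigma^2 / \bigl(\gamma_{N+1}\bar\alpha(2-\gamma_{N+1}\bar\alpha)\bigr)$: we have $D(N) < 0$ iff $\mathcal{E}(N) > \theta_N$.

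Next I would establish monotonicity of the relevant quantities to pin down the single crossing. The key structural fact is that $\mathcal{E}(N)$ is itself monotonically decreasing as long as $D(N-1) < 0$, i.e. as long as we are still in the decreasing regime; and once $\mathcal{E}(N)$ drops at or below the threshold, the contraction map $\mathcal{E} \mapsto c_N\mathcal{E} + \gamma_{N+1}^2\sigma^2$ has a fixed point at exactly $\theta_N$ (when $\gamma$ and $\bar\alpha$ are step-independent this is literally the stationary value $\sigma^2\gamma/(\bar\alpha(2-\gamma\bar\alpha))$), so $\mathcal{E}(N)$ cannot fall below $\theta$ again — it is trapped above. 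Concretely, in the clean constant-parameter case ($\gamma_i \equiv \gamma$, $\bar\alpha$ fixed, $\Delta \equiv 0$), $\mathcal{E}(N) = (b_0 - \theta)c^N + \theta$ with $c = (1-\gamma\bar\alpha)^2$, which is manifestly strictly monotone in $N$ (decreasing if $b_0 > \theta$, the interesting case), and $D(N) = (b_0-\theta)c^N(c-1)$ has constant sign — so actually in the strictly constant case the U-shape degenerates. This tells me the genuine U-shape requires the heterogeneity in $\gamma_i$ (or the $\Delta(N)$ term), and the theorem as stated must implicitly assume $\gamma_i$ decreasing, or $\Delta(N)$ eventually dominant, or $\bar\alpha$ drifting — so I would state the precise monotonicity hypothesis on $(\gamma_i)$ and $\Delta$ under which the argument goes through (e.g. $\gamma_i$ nonincreasing with $\gamma_i \to 0$, forcing $\theta_N \to 0$ while the accumulated-noise sum $\sigma^2\sum \gamma_i^2 \prod(\cdots)$ and/or $\Delta(N)$ grows, eventually pushing $\mathcal{E}(N)$ back up).

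With the hypotheses fixed, the argument for part \ref{item:main} runs: show $D(N)$ can be rewritten so that $D(N) \le 0 \Rightarrow D(N+1)$ has the sign determined by whether $\mathcal{E}(N+1)$ still exceeds $\theta_{N+1}$; then use that on $\{N < N^\star\}$ we have by definition of $N^\star$ that $D(N) < 0$ hence $\mathcal{E}$ is strictly decreasing there, while at $N = N^\star$ the inequality $\mathcal{E}(N^\star) \le \theta_{N^\star}$ combined with the trapping property (the update map pushes values at or below $\theta$ to stay at or below $\theta_{N+1}$, using monotonicity of $\theta_N$) gives $D(N) > 0$ for all $N \ge N^\star$, i.e. strict increase thereafter. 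Part \ref{item:unique1} is then immediate: a function that strictly decreases up to $N^\star$ and strictly increases after has a unique minimizer, and the plot against $N$ is U-shaped by definition.

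The main obstacle I anticipate is twofold. First, the $\Delta(N) \ge 0$ term is only bounded below in the excerpt, with no recursion or monotonicity specified, so the clean recursion $\mathcal{E}(N+1) = c_N\mathcal{E}(N) + \gamma_{N+1}^2\sigma^2$ does not literally hold and I must either fold $\Delta$ into the argument as an extra nondecreasing perturbation (which helps the "eventually increasing" half but can break the "strictly decreasing" half unless its increments are small relative to the contraction gain early on) or simply add the assumption $\Delta \equiv 0$ or $\Delta$ affine-recursive. Second, and more seriously, the trapping/uniqueness step needs the threshold sequence $\theta_N$ to be monotone (nonincreasing) so that once $\mathcal{E}$ is caught below $\theta_N$ it stays below $\theta_{N+1}$; this again requires a monotonicity hypothesis on $\gamma_i$ that the theorem statement suppresses. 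I would therefore open the proof by making these standing assumptions explicit ($\gamma_i$ nonincreasing, or bounded in a suitable window, and a structural assumption on $\Delta$), then carry out the sign-chasing above; the inequalities themselves are routine one-dimensional algebra once the regime is correctly identified.
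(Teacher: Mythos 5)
Your route (the one–step recursion $\mathcal{E}(N{+}1)=c_N\,\mathcal{E}(N)+\gamma_{N+1}^{2}\sigma^{2}$ followed by a threshold/single–crossing argument) is genuinely different from the paper's, which decomposes $\mathcal{E}(N)=A(N)+V(N)+\Delta(N)$ into a bias term $A(N)=b_0P(N)^2$ and a variance term $V(N)$, argues that $A$ is strictly decreasing with $A(N)\to0$, asserts that $V$ is strictly increasing with $V(N)\to\infty$, and concludes a unique sign change of the forward difference. More importantly, your central observation is correct and it exposes a real defect in the paper's own argument. In the constant-parameter case $\gamma_i\equiv\gamma$, $\Delta\equiv0$ (which satisfies every hypothesis the appendix imposes, including $\sum_i\gamma_i=\infty$), one has exactly your closed form $\mathcal{E}(N)=(b_0-\theta)c^{N}+\theta$ with $c=(1-\gamma\bar\alpha)^2$ and $\theta=\sigma^{2}\gamma^{2}/(1-c)$, which is strictly monotone in $N$: there is no interior minimum and no U-shape. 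This contradicts two specific steps in the paper's proof: (i) the claim that $V(N{+}1)-V(N)>0$ ``since every summand is positive'' fails because $P(i{+}1,N{+}1)^2-P(i{+}1,N)^2<0$ for every $i\le N$; and (ii) the claim that $V(N)\to\infty$ fails because $V(N)=\sigma^{2}\gamma^{2}(1-c^{N})/(1-c)\le\theta$ is bounded. The same issue recurs in the paper's constant-$\gamma$ lemma, where the derivative $f'(t)=2(\ln\rho)\rho^{2t}\bigl[-b_0+\sigma^{2}\gamma^{2}/(1-\rho^{2})\bigr]$ has a bracket independent of $t$, hence constant sign, hence no interior critical point.

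So you have not merely taken a different path; you have correctly diagnosed that the statement, under the hypotheses actually given, does not hold without further assumptions (e.g.\ decaying $\gamma_i$ so that $\theta_N\to0$, or an eventually dominant, genuinely increasing $\Delta(N)$). Your proposed repair is the right one, and your candor about where the trapping step needs explicit monotonicity of $(\gamma_i)$ and $\theta_N$ is appropriate. What remains to turn your plan into a proof is to (a) state those standing assumptions precisely, (b) handle $\Delta(N)$, for which the paper supplies no recursion, either by folding it into the affine update or by carrying it as a separate nondecreasing perturbation whose early increments are dominated by the contraction gain, and (c) prove the threshold monotonicity your trapping argument uses. As written, yours is a proof plan rather than a proof, but it is a sound plan for a corrected statement, whereas the paper's proof of the statement as given is not valid.
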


Proofs can be found in Appendix~\ref{thm:Ushape}. 
The structure of Eq~\ref{eq:expected_error_main} can be intuitively interpreted to reveal a fundamental bias–variance trade-off. 
The first term shrinks as \(N\) increases: repeated evidence integration reduces the bias. However, both the second term (arising from answer-generation noise) and the third term (from stochastic question quality) grow with \(N\), reflecting variance accumulation. This competition creates a non-monotonic curve in \(\mathcal{E}(N)\): at small depths, additional reasoning reduces error, but beyond a critical point, further reasoning begins to increase it. 
Reasoning that terminates at any depth \(N \ne N^\star\) incurs additional error due to either under- or over-reasoning. We refer to this deviation as the \emph{Summary Error}—a global error that arises not from a single misstep, but from a misjudged stopping point over an otherwise well-structured reasoning trajectory.

\subsection{Theoretical Guidance for Intervention Design}
\label{sec:theory_mitigation}
In this section, we present additional theoretical results that clarify the structure of each component, which serves the foundation of mitigation methods that discussed in Section~\ref{sec:5}. 
Full derivations are provided in Appendix~\ref{appendix:closed-form-error}.

\textbf{Controlling sub-task decomposition error (\(\alpha_k\)).}
The alignment scalar \(\alpha_k\) determines how well the posed sub-question targets the true residual error. Positive values (\(\alpha_k > 0\)) reduce bias; negative values (\(\alpha_k < 0\)) increase it exponentially with reasoning depth. Lemma~\ref{lem:alpha-regimes} establishes that any improvement in alignment (\(\alpha_k \uparrow\)) strictly reduces error, while misaligned reasoning (\(\alpha_k < 0\)) guarantees divergence.
This motivates interventions that eliminate unconstrained question posing, that we will discuss in Section~\ref{sec:5}. 

\textbf{Controlling sub-task solving error (\(\varepsilon_k\)).}
The noise term \(\varepsilon_k\) models stochastic variation in answer generation, which is scaled by the belief integration weight \(\gamma_k\), making \(\gamma_k\) a critical amplifier of solving error. Lemma~\ref{lem:sensitivity} formalizes this relationship: while alignment (\(\alpha_k\)) always improves performance, there exists an optimal \(\gamma_k^\star\) that balances the benefit of information integration against the risk of over-amplifying noise. This interaction motivates our intervention at the solving phase. We will reduce effective variance \(\sigma^2\) by anchoring the model’s outputs to structurally valid reasoning traces in Section~\ref{sec:5}. 

\textbf{Controlling summarization error (global stopping rule).}
Reasoning must terminate at some depth \(N\), at which point the model commits to its final prediction \(\hat{y}_N\). Theorem~\ref{thm:main} proves that expected squared error \(\mathcal{E}(N)\) is U-shaped in \(N\), with a unique optimum \(N^\star\). In addition, Lemma~\ref{lem:constant-gamma} provides a closed-form solution for \(N^\star\) in the constant-\(\gamma\) case, offering theoretical guidance on setting reasoning length. Accordingly, we constrain model outputs via a fixed token budget to discourage excessively long traces and nudges reasoning toward near-optimal depths in Section~\ref{sec:5}.  

\subsection{Empirical Analysis of Reasoning Errors}\label{section_emprical}

We empirically validate the theoretical taxonomy introduced in Section~\ref{sec:theory_setup}, which identifies three primary sources of reasoning failure: \textbf{(1) Incorrect Sub-task Decomposition (Breakdown Error)} arising from incorrect sub-task decomposition, \textbf{(2) Incorrect Sub-task Solving (Solving Error)} caused by noise in sub-task resolution, and \textbf{(3) Incorrect Final Answer Summarization (Summary Error)} resulting from premature or excessive reasoning steps. These failure modes are grounded in the dynamics of the belief update equation~\eqref{eq:error_recursion_main}, where errors propagate via suboptimal alignment (\(\alpha_k\)), additive noise (\(\varepsilon_k\)), and misjudged stopping time \(N\).

Among these, \textbf{Solving Error dominates across all models and tasks}, accounting for over 80\% of failure cases. While theoretically modeled as additive noise, solving failures often exhibit structured patterns in practice. Based on prior analyses of LLM reasoning drift 
Based on consistent error patterns we observed in over 100 failed reasoning traces across multiple tasks and models,
we classify these errors into three observable subtypes: 
(1) \textit{Math Overuse}, where models inappropriately apply arithmetic operations to symbolic inputs (e.g., card suits or chess pieces);
(2) \textit{Overgeneralization}, where rules are inferred from few examples without proper validation; and
(3) \textit{Hallucinated Rules}, where fabricated constraints are introduced without support from input observations. Representative examples for each are provided in Appendix~\ref{appendix:solving-error-examples}. 

Breakdown Errors are less frequent but still consequential, especially in structurally complex games like Texas Hold'em. These correspond to misaligned sub-task decomposition, where the model fixates on irrelevant features or ignores core inductive structure. Summary Errors are the least frequent and occur when models produce overly long or overly short reasoning chains, diverging from the optimal depth \(N^\star\) identified in Theorem~\ref{thm:main}.

To ensure reliability, each failure trace was independently reviewed by two annotators using shared labeling criteria. While some interpretation was involved—as is typical in reasoning error analysis—agreement was high, and disagreements were resolved collaboratively. Rather than impose rigid boundaries, we aim to capture recurring, interpretable failure patterns observed consistently across models and games. Table~\ref{tab:reasoning-errors} summarizes the distribution, and Appendix~\ref{appendix:breakdown-error-examples},~\ref{appendix:solving-error-examples}, and~\ref{appendix:summary-error-examples} provides representative examples for each error type.



\begin{table}[htbp]
\centering
\Huge
\renewcommand\arraystretch{1.3}
\caption{Error rate analysis across different games and models}
\label{tab:reasoning-errors}
\resizebox{1\linewidth}{!}{
\begin{tabular}{ccccccc}
\toprule
\multirow{3}{*}{Games}         & \multirow{3}{*}{Models} & \multicolumn{5}{c}{Error Rate (count / total)}                                                                \\ \cline{3-7} 
                               &                         & \multirow{2}{*}{Breakdown} & \multicolumn{3}{c}{Solving}                           & \multirow{2}{*}{Summary} \\ \cline{4-6}
                               &                         &                            & Hallucinated Rule & Overgeneralization & Math Overuse &                          \\ \hline
\multirow{3}{*}{Chess}         & DeepSeek-R1             & 5.8\% (64/1109)            & 17.2\% (191/1109) & 22.3\% (247/1109)  & 47.4\% (526/1109) & 7.3\% (81/1109)       \\
                               & QwQ                     & 4.3\% (52/1217)            & 16.5\% (201/1217) & 26.1\% (318/1217)  & 52.1\% (634/1217) & 1.0\% (12/1217)       \\
                               & Grok3                   & 4.1\% (55/1333)            & 14.5\% (193/1333) & 24.5\% (327/1333)  & 50.7\% (676/1333) & 6.2\% (82/1333)       \\ \hline
\multirow{3}{*}{Texas Hold’em} & DeepSeek-R1             & 9.3\% (14/151)             & 15.9\% (24/151)   & 11.9\% (18/151)    & 58.9\% (89/151)  & 4.0\% (6/151)         \\
                               & QwQ                     & 14.0\% (21/150)            & 15.3\% (23/150)   & 22.0\% (33/150)    & 42.7\% (64/150)   & 6.0\% (9/150)         \\
                               & Grok3                   & 13.2\% (19/144)            & 26.4\% (38/144)   & 12.5\% (18/144)    & 40.3\% (58/144)   & 7.6\% (11/144)        \\ \hline
\multirow{3}{*}{Dice games}    & DeepSeek-R1             & 5.7\% (3/53)               & 11.3\% (6/53)     & 20.8\% (11/53)     & 60.4\% (32/53)    & 1.9\% (1/53)          \\
                               & QwQ                     & 10.3\% (4/39)              & 12.8\% (5/39)     & 23.1\% (9/39)      & 48.7\% (19/39)    & 5.1\% (2/39)          \\
                               & Grok3                   & 3.4\% (2/59)               & 18.6\% (11/59)    & 10.2\% (6/59)      & 61.0\% (36/59)    & 6.8\% (4/59)          \\ \hline
\multirow{3}{*}{Blackjack}     & DeepSeek-R1             & 6.7\% (4/60)               & 21.7\% (13/60)    & 13.3\% (8/60)      & 53.3\% (32/60)    & 5.0\% (3/60)          \\
                               & QwQ                     & 8.6\% (6/70)               & 20.0\% (14/70)    & 28.6\% (20/70)     & 40.0\% (28/70)    & 2.9\% (2/70)          \\
                               & Grok3                   & 8.2\% (5/61)               & 18.0\% (11/61)    & 19.7\% (12/61)     & 47.5\% (29/61)    & 6.6\% (4/61)          \\ \bottomrule
\end{tabular}
}
\vspace{-15pt}
\end{table}

Three key observations emerge: First, \textbf{Solving Errors dominate}, accounting for over 80\% of failures in most settings. This aligns with our theoretical model, where errors introduced during sub-task resolution (\(\varepsilon_k\)) propagate through the reasoning chain. Second, within Solving Errors, \textbf{Math Overuse consistently ranks highest}. For example, it accounts for 60.4\% of all failures in Dice Games (DeepSeek-R1) and 53.3\% in Blackjack. This suggests a recurring bias where models incorrectly apply arithmetic operations to symbolic domains—e.g., treating card suits or piece positions as numeric inputs. Notably, this pattern holds across structurally diverse tasks and models, indicating a deeper inductive misalignment rather than task-specific noise. Third, \textbf{Breakdown Errors peak in Texas Hold’em} (up to 14.0\%), reflecting its higher structural complexity. In contrast, \textbf{Summary Errors remain rare} across the board (<8\%), implying that most failures arise during solving rather than early decomposition or final summarization. Together, these results indicate that reasoning fails not from lack of logical capacity, but because early inductive errors—especially Math Overuse—are preserved and propagated across multi-step reasoning chains.

\begin{tcolorbox}[colback=gray!5!white, colframe=gray!75!black, boxrule=0.4pt, sharp corners]
\textbf{RQ2: Why does reasoning sometimes fail to improve inductive performance in large language models?}

\textbf{Answer.} Reasoning propagates error: when sub-task decomposition is misaligned, or solving introduces noise, each reasoning step compounds the mistake. Such failures dominate in practice, making deeper reasoning harmful unless each step is reliable.
\end{tcolorbox}

\section{Improving CoT Reasoning}\label{sec:5}

\subsection{Intervention Design}\label{Intervention_Design}

Building on our theoretical framework, we design interventions that directly target each of the three failure modes: incorrect sub-task decomposition, sub-task solving noise, and overextended reasoning. 
The Appendix~\ref{appendix:intervention-prompts} shows the detailed prompts for each intervention. 

\textbf{Sub-task decomposition.}
To address Breakdown Error associated with misaligned sub-task formulation (\(\alpha_k \approx 0\)), we replace free-form reasoning with structured decomposition templates. Each template explicitly separates the reasoning into three phases: (i) identifying relevant entities in the input (e.g., cards, pieces, or dice), (ii) inducing candidate rules based on observed patterns, and (iii) verifying whether new cases satisfy those rules. This design constrains model output to stay within task-relevant abstractions and improves alignment between sub-questions and the global task objective. Prior work has shown that semantically aligned decompositions improve reliability in multi-step reasoning~\cite{zhou2022least}.

\textbf{Sub-task solving.}
Solving Error arises when models introduce stochastic variation (\(\varepsilon_k\)) even under good decomposition. The most frequent subtype is Math Overuse, where models impose numeric operations onto symbolic inputs. To mitigate this, we follow Kuo et al.~\cite{kuo2025h} and guide solving with worked examples that avoid numeric extrapolation. These examples anchor model behavior to structurally relevant patterns, reducing variance and discouraging inappropriate generalization.

\textbf{Answer summarization.}
Summary Error occurs when reasoning continues past the optimal step count \(N^\star\), causing error accumulation from unnecessary updates. We adopt recent ideas from efficient reasoning~\cite{wang2025harnessing, han2024token} and impose a strict token budget of 1000 tokens per instance. This constraint limits excessive generation and encourages early commitment to accurate conclusions.

\textbf{Combined intervention.}
Each intervention targets a distinct error component, but failure modes often co-occur. We therefore evaluate both isolated and combined settings. The full system integrates decomposition templates, solving-phase examples, and summarization constraints to stabilize the entire reasoning traces.

\subsection{Results and Analysis}
We evaluate the improving of performance guided by our intervention strategies across all four games, measuring rule-level induction accuracy under each condition. The results can be found in Fig.~\ref{fig:all_bar}.

\begin{figure}[htbp]
    \vspace{-5pt}
    \centering
    \includegraphics[width=\textwidth]{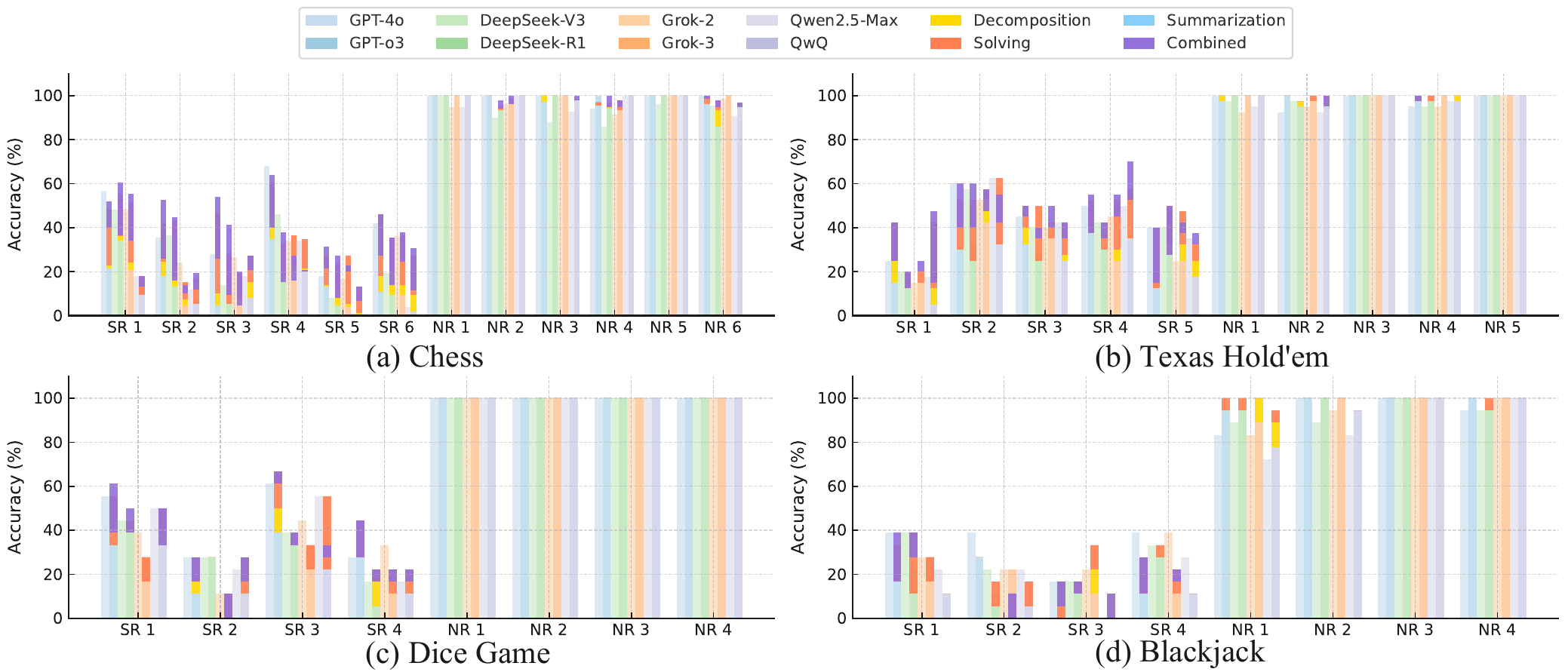}
    \caption{Inductive rule accuracy across different intervention strategies and models for each game domain. Each subfigure corresponds to one game; bars show average rule-wise accuracy under different reasoning-stage interventions. Across all domains, combined intervention (rightmost bars) achieves the highest performance, especially on special rules (SRs), indicating that structured decomposition, guided solving, and summarization control jointly enhance inductive abilities.}
    \label{fig:all_bar}
    \vspace{-5pt}
\end{figure}

We observe consistent improvements under the combined intervention setting, especially on SRs, which involve higher inductive complexity. In most cases, combined intervention leads to clearly higher SR accuracy than any individual strategy. For example, in Chess and Dice, SR1–SR3 accuracy increases by 20–40\% when guided chain-of-thought is applied, suggesting that multiple types of reasoning failures often co-occur and need to be addressed together. Compared to non-reasoning models, the combined intervention also achieves higher accuracy on both SRs and NRs across most games, indicating that improvements come from better reasoning structure rather than longer reasoning steps. Among individual strategies, sub-task decomposition is most helpful in structurally rich games like Chess and Blackjack, while solving-stage guidance contributes more in symbol-heavy domains such as Dice and Texas Hold’em. Summarization control provides moderate improvements across all tasks by reducing over-generation in the final answer stage. Performance on NR rules remains saturated under all settings, confirming that gains on SRs are not a result of trade-offs with simpler rules. However, we also observe that combined intervention does not always outperform the best individual strategy. In some cases, interactions between decomposition and solving stages may limit overall effectiveness, highlighting the difficulty of coordinating multiple reasoning processes.


\begin{tcolorbox}[colback=gray!5!white, colframe=gray!75!black, boxrule=0.4pt, sharp corners]
\textbf{RQ3: How can we improve the inductive performance of reasoning-enabled LLMs?}

\textbf{Answer.} Inductive performance improves when reasoning is constrained. We achieve consistent gains by (i) enforcing structured decomposition, (ii) guiding solving with non-numeric examples, and (iii) limiting over-generation through token budgets. Combined, these interventions reduce error amplification across all reasoning phases.
\end{tcolorbox}

\section{Conclusion}\label{sec:6}
In this paper, we investigate the inductive abilities of large language models through four designed games. Our analysis reveals that, contrary to expectations, reasoning—particularly in LRMs—does not always help; when poorly structured, it can even harm performance by introducing irrelevant or misleading chains of thought. We theoretically identify three core types of reasoning failure and empirically verify them through human evaluation of model-generated traces. To address these issues, we introduce targeted interventions at the decomposition, solving, and summarization stages. Experimental results show that each intervention independently improves inductive accuracy, confirming the importance of structured, well-controlled reasoning.


\bibliographystyle{unsrt}

\appendix
\newpage
\appendix
\section{Details for Judging}
\subsection{Examples of Judging}~\label{appendix:gptjudge}
To illustrate the evaluation process, we provide examples of GPT-4o judging whether a model's induced rule aligns with the ground-truth. For each case, we show (1) the true rule, (2) the model-induced rule, and (3) GPT-4o’s binary judgment.

\begin{table}[htbp]
\centering
\small
\caption{Examples of GPT-4o judging model-induced rules. Each row shows the true rule, the model's induced rule, and GPT-4o's binary alignment judgment with explanation.}
\begin{adjustbox}{max width=\textwidth}
\begin{tabular}{p{2.2cm} p{4.3cm} p{4.3cm} p{4.5cm}}
\toprule
\textbf{Game} & \textbf{Ground-Truth Rule} & \textbf{Model-Induced Rule} & \textbf{GPT-4o Judgment} \\
\midrule

Chess &
SR6: Move in a straight line any number of squares, followed by a one-square diagonal shift. &
The piece moves straight and then one square diagonally. &
Yes, the hypothesis matches the ground-truth. The structure and intent are preserved. \\
\addlinespace

Chess &
SR4: Jump to the symmetric position across the nearest blocking piece. &
The piece teleports to a random mirrored position. &
No, the hypothesis does not match. The true rule requires a specific symmetry across blockers.\\
\addlinespace

Texas Hold’em &
SR3: Alternating odd–even values are treated as a mirror hand, beating any straight. &
A hand with alternating red and black suits is promoted to beat any straight. &
No, the hypothesis does not match. The ground-truth is based on numeric parity, not suit color. \\
\addlinespace

Texas Hold’em &
SR4: Five consecutive even numbers in the same suit count as a straight flush. &
A hand with five even cards, regardless of suit, is treated as a flush. &
No, the hypothesis does not match. It ignores the suit constraint and downgrades the hand. \\
\addlinespace

Dice Game &
SR1: If the sum is a prime number, the roll beats all hands including triples. &
If the sum is prime, the roll wins. &
Yes, the hypothesis matches the ground-truth. The paraphrasing is correct. \\
\addlinespace

Dice Game &
SR4: A pair plus a third die differing by exactly one (e.g., 4–4–5) beats regular pairs. &
If a roll contains a pair and any third die, it beats all totals. &
No, the hypothesis does not match. It omits the “difference of one” condition. \\
\addlinespace

Blackjack &
SR2: A three-card straight flush is treated as a blackjack. &
If a hand has three consecutive cards in the same suit, it is equivalent to blackjack. &
Yes, the hypothesis matches the ground-truth. The core condition is preserved. \\
\addlinespace

Blackjack &
SR3: A pair of different suits causes automatic loss. &
If a hand contains a pair, it loses automatically. &
No, the hypothesis does not match. It generalizes beyond the suit constraint. \\
\bottomrule
\end{tabular}
\end{adjustbox}
\label{tab:gpt4o-judging-clean}
\end{table}

\subsection{Prompts for Judging}\label{prompt-for-judege}

To evaluate whether a model-inferred rule matches the ground-truth, we prompt GPT-4o with the following template. The input includes the game context, the correct rule, the model hypothesis, and a fixed binary instruction.

\begin{tcolorbox}[colback=gray!05, colframe=black, arc=1mm, boxrule=0.5pt, title=GPT-4o Judgment Prompt]
You are given a reasoning task involving [GAME TYPE].

Below is the ground-truth rule and a model’s hypothesis.

Decide whether the hypothesis matches the ground-truth rule in semantics.

Ground-truth rule: [INSERT TRUE RULE]  

Model-induced rule: [INSERT MODEL RULE]

Answer with: ``Yes, the induced rule matches the ground-truth.'' or ``No, the induced rule does not match.'' Briefly explain your reasoning in one sentence.
\end{tcolorbox}

\section{Full Rule Specifications for Each Game}
\label{appendix:rule-details}

\subsection{Chess}
We define eight types of pieces. Each is assigned one rule from the following pools.

\paragraph{Normal Rules (NRs):}
\begin{itemize}
\item NR1: Move one square in any direction.
\item NR2: Move in an L-shaped pattern: two squares in one direction and one square perpendicular.
\item NR3: Move any number of squares diagonally.
\item NR4: Move exactly two squares forward (in the direction of increasing row).
\item NR5: Move any number of squares straight (horizontally or vertically).
\item NR6: Move exactly two squares diagonally.
\end{itemize}

\paragraph{Special Rules (SRs):}
\begin{itemize}
\item SR1: Move in a straight line any number of squares, then shift vertically by exactly two squares.
\item SR2: Move diagonally any number of squares, then two squares in a perpendicular diagonal direction.
\item SR3: Move exactly three squares in one direction, then move one square downward.
\item SR4: Jump to the symmetric position across the nearest blocking piece.
\item SR5: Swap with a target piece on an occupied square within distance $\leq 3$.
\item SR6: Move in a straight line any number of squares, followed by a one-square diagonal shift.
\end{itemize}

Each game selects four NRs and four SRs, randomly assigning one rule to each piece. Rule assignments are reshuffled every episode.

\subsection{Texas Hold’em}
\paragraph{Normal Rules (NRs):}
\begin{itemize}
\item NR1: A hand with one pair is treated as stronger than any high card.
\item NR2: A hand with three of a kind is treated as stronger than two pairs.
\item NR3: A straight (five cards in sequential rank, any suit) is treated as stronger than three of a kind.
\item NR4: A flush (five cards of the same suit, not in sequence) is treated as stronger than any straight.
\item NR5: Four of a kind (four cards of the same rank) is treated as stronger than any flush.
\end{itemize}

\paragraph{Special Rules (SRs):}
\begin{itemize}
\item SR1: A hand containing five consecutive prime numbers (e.g., 2–3–5–7–J) is treated as stronger than any three-of-a-kind.
\item SR2: A hand with alternating card colors (e.g., red–black–red–black–red) is treated as a straight regardless of numeric order.
\item SR3: A hand with alternating odd and even values is treated as a ``mirror hand'' and beats any straight.
\item SR4: A hand containing five consecutive even numbers in the same suit is treated as a straight flush.
\item SR5: A hand with four cards of one parity (odd/even) and one of the opposite parity is treated as a ``hybrid hand,'' ranking just below four of a kind.
\end{itemize}

Games sample two NRs and two SRs per episode, covering 100 combinations ($C_5^2 \times C_5^2$), each with twelve hands (four per rule).

\subsection{Dice Games}
\paragraph{Normal Rules (NRs):}
\begin{itemize}
\item NR1: A total sum between 4 and 10 (inclusive) is a ``small total.''
\item NR2: A total sum between 11 and 17 (inclusive) is a ``large total.''
\item NR3: A roll containing any pair is treated as stronger than small or large totals.
\item NR4: A triple (three identical dice) is treated as stronger than any pair or total.
\end{itemize}

\paragraph{Special Rules (SRs):}
\begin{itemize}
\item SR1: If the total sum is a prime number, the roll beats any hand including triples.
\item SR2: If all three dice are prime numbers (2, 3, 5), the roll beats all hands except SR1.
\item SR3: If the dice alternate in parity (odd–even–odd or even–odd–even), the roll beats all hands except SR1/SR2/triples.
\item SR4: If the roll contains a pair and the third die differs from the pair by exactly one (e.g., 4–4–5), the roll beats any regular pair or total.
\end{itemize}

36 rule combinations ($C_4^2 \times C_4^2$), 12 observations per episode, 18 appearances per rule.

\subsection{Blackjack}
\paragraph{Normal Rules (NRs):}
\begin{itemize}
\item NR1: A hand totaling exactly 21 is a ``blackjack'' and wins.
\item NR2: Any hand exceeding 21 is a bust. If both bust, the closer total to 21 wins.
\item NR3: If neither busts nor hits 21, the hand with the higher total wins.
\item NR4: An ace can be counted as either 1 or 11 to optimize the hand.
\end{itemize}

\paragraph{Special Rules (SRs):}
\begin{itemize}
\item SR1: If the total sum is a prime number, the hand wins regardless of bust.
\item SR2: A three-card straight flush is treated as a ``blackjack'' regardless of total.
\item SR3: A hand with exactly one pair of different suits is a special loss.
\item SR4: A hand with three non-consecutive values where the middle equals the average of the other two (e.g., 3–6–9) is an automatic win.
\end{itemize}

Each hand contains five cards. Rule combinations: $C_4^2 \times C_4^2 = 36$, 12 hands per episode, 18 per rule.

\section{Mode Semantics and Admissible Actions}
\label{appendix:mode-semantics}

At each step \(k\), the model is in mode \(s_k \in \{\textsc{NeedQ}, \textsc{NeedA}, \textsc{Finish}\}\) with transitions:

\[
\begin{array}{lll}
s_k = \textsc{NeedQ} &\Rightarrow& a_k = \mathbf{Ask}(q_k), \quad s_{k+1} = \textsc{NeedA} \\
s_k = \textsc{NeedA} &\Rightarrow& a_k \in \{\mathbf{Answer}, \mathbf{Finish}\}, \quad s_{k+1} \in \{\textsc{NeedQ}, \textsc{Finish}\} \\
s_k = \textsc{Finish} &\Rightarrow& \text{halt; output } \hat{y} = m_k
\end{array}
\]

This structure enforces a cognitively meaningful control flow and localizes reasoning failure by step type.


\section{Prompts for LLMs to indcude rules}\label{Prompts-for-LLMs}
\subsection{Prompts}
The following prompt is used to instruct the model to induce latent rules from gameplay observations. Each instance provides a full episode of actions and outcomes. The model must identify the underlying rule that best explains the observed behavior.

\begin{center}
\begin{tcolorbox}[colback=gray!10,
                  colframe=black,
                  arc=1.5mm, auto outer arc,
                  boxrule=0.9pt,
                  title = {Prompts for LLMs}
                 ]
\textbf{(Background Information)} \\
You are an agent tasked with discovering how a certain game works. The rules of the game are unknown to you in advance. However, you are given a full gameplay transcript from a single episode, showing the actions taken and whether they resulted in success or failure. The rule used in each episode is internally consistent, but may differ across episodes. Your goal is to infer this rule by observing what patterns emerge in the examples.
\textbf{(Instruction)}
Carefully analyze the episode shown below. Use only the observed behavior and outcomes to deduce the underlying rule. Be precise and concise. Your response should summarize the rule as clearly as possible, in natural language. Do not speculate beyond the given examples.

Do not list the examples again. Do not include uncertainty statements (``it might be...'', ``perhaps...''). Just state the rule that best explains the episode.

\textbf{(...insert game transcripts here...)}

\textbf{(Output Format)}
Your response should follow this format: \\
Induced Rule: <your concise natural language description>

\end{tcolorbox}
\end{center}

\subsection{Transcripts of each game}
To support rule induction, each model is provided with a structured transcript capturing the observable behavior of a single game episode. These transcripts serve as the sole source of information available to the model—they do not include rule annotations or symbolic summaries. Instead, models must infer the governing logic by analyzing regularities in the recorded actions and outcomes.

Each game type has a customized transcript format reflecting its structural characteristics. For example, chess episodes log initial piece placements, move sequences, and capture events; Texas Hold’em transcripts include player hands, public cards, and final winners; dice games record raw rolls and outcome labels; blackjack tracks hands, point totals, and bust states.

Table~\ref{tab:transcript-structure} summarizes the key fields recorded for each domain, along with representative examples. These transcripts constitute the model's input during the rule induction task described in Section~\ref{Prompts-for-LLMs}.
\begin{table}[h]
\centering
\small
\caption{Gameplay transcript fields and structured examples for each game domain.}
\renewcommand{\arraystretch}{1.2}
\begin{adjustbox}{max width=\textwidth}
\begin{tabular}{m{2.8cm} m{6.8cm} m{5.2cm}}
\toprule
\textbf{Game} & \textbf{Recorded Fields per Episode} & \textbf{Example Entry} \\
\midrule

\makecell[l]{\textbf{Chess}\vspace{-5em}} &
\parbox[t]{6.8cm}{
\begin{itemize}[leftmargin=0pt]
  \item Board size (e.g., 8×8, 15×15)
  \item Initial piece placement (e.g., Red King @ m14)
  \item Round-wise moves: source $\rightarrow$ target
  \item Events: captures, illegal moves
  \item Final outcome (optional)
\end{itemize}}
&
\parbox[t]{5.2cm}{
\begin{itemize}[leftmargin=0pt]
\item Board: 15×15 
\item Red King @ m14, Black Queen @ k2
\item Round 1: Red: m14→o13; Black: k2→k0
\item Red captures Black Bishop 
\end{itemize}}
\\

\makecell[l]{\textbf{Texas Hold’em}\vspace{-5em}}&
\parbox[t]{6.8cm}{
\begin{itemize}[leftmargin=0pt]
  \item Player hole cards (2 per player, with suits)
  \item Community cards (flop, turn, river)
  \item Winning player and hand rank
  \item Hand type comparison
\end{itemize}}
&
\parbox[t]{5.2cm}{
\begin{itemize}[leftmargin=0pt]
\item Player A: 2\Spade, 4\Club
\item Player B: 3\Club, 3\Spade
\item Board: 4\Club, 5\Heart, 6\Spade, 9\Spade, Q\Club 
\item Winner: Player B (Pair of Threes)
\end{itemize}}
\\

\makecell[l]{\textbf{Dice Game}\vspace{-5em}} &
\parbox[t]{6.8cm}{
\begin{itemize}[leftmargin=0pt]
  \item Roll result: list of 3 dice
  \item Outcome: win/loss
  \item Optional: derived features (e.g., parity, sum, triple)
\end{itemize}}
&
\parbox[t]{5.2cm}{
\begin{itemize}[leftmargin=0pt]
  \item Roll 1: [4, 4, 5] → Win
  \item Roll 2: [2, 6, 6] → Win
  \item Roll 3: [1, 3, 6] → Lose
\end{itemize}}
\\

\makecell[l]{\textbf{Blackjack}\vspace{-5em}} &
\parbox[t]{6.8cm}{
\begin{itemize}[leftmargin=0pt]
  \item Player hand (5 cards)
  \item Total score after ace adjustment
  \item Whether bust occurred
  \item Outcome: win/loss/tie
\end{itemize}}
&
\parbox[t]{5.2cm}{
\begin{itemize}[leftmargin=0pt]
  \item Player: 5\Spade, 3\Club, A\Club, 2\Spade, 9\Heart 
  \item Total: 20 (A=11), No bust
  \item Result: Win vs Dealer (Bust)
\end{itemize}}
\\

\bottomrule
\end{tabular}
\end{adjustbox}
\label{tab:transcript-structure}
\end{table}

\section{Belief Update and Error Recursion}
\label{appendix:error-recursion}

\paragraph{Intuitive goal.}
Reasoning is modelled here as an \emph{iterative information–integration
process}: each \textbf{Ask} action decides \emph{where to look next},
and each ensuing \textbf{Answer} returns \emph{evidence} that nudges the
current belief $m_{k-1}$ either closer to, or further from, the true
solution $y^\star$.  The agent does \emph{not} observe $y^\star$
directly; instead it receives a vector signal that is \emph{correlated}
with the residual error $(y^\star-m_{k-1})$ but corrupted by two
orthogonal noise sources:

\begin{enumerate}
\item \textbf{Question–alignment noise} ($\alpha_k$): cognitive
      or pragmatic imperfections in how the sub‑question is framed.
      A well‑posed question ($\alpha_k\approx1$) elicits evidence that
      points almost exactly along the residual error, whereas a
      misguided question ($\alpha_k\approx0$) yields a useless tangent,
      and a catastrophically wrong question ($\alpha_k<0$) returns
      evidence pointing the \emph{opposite} way.
\item \textbf{Answer‑generation noise} ($\varepsilon_k$): stochasticity
      in the LLM’s sampling, retrieval errors, hallucinations, token
      truncation, or memory decay.
\end{enumerate}

\medskip
\paragraph{Evidence model.}
Conditioned on the history up to step $k{-}1$, the \textbf{Answer}
action produces the \textit{evidence vector}
\begin{equation}\label{eq:evidence}
  g_k
  \;=\;
  \underbrace{\alpha_k}_{\text{alignment}}
  \bigl(y^\star - m_{k-1}\bigr)
  \;+\;
  \underbrace{\varepsilon_k}_{\text{answer noise}},
  \qquad
  \varepsilon_k \sim \mathcal N\!\bigl(0,\sigma^2 I_d\bigr),
\end{equation}
where
$\alpha_k\in[-1,1]$ is an i.i.d.\ scalar with
\(\mathbb E[\alpha_k]=\bar\alpha\in(0,1)\) and
\(\text{Var}[\alpha_k]=\tau^2\).
Eq.~\eqref{eq:evidence} embodies two critical properties:

\begin{enumerate}
\item \emph{Directionality.}  The deterministic component
      $\alpha_k(y^\star-m_{k-1})$ is collinear with the current error
      vector.  Its amplitude encodes how well the sub‑question aligns
      with the unknown residual structure.
\item \emph{Zero‑mean perturbations.}  The random component
      $\varepsilon_k$ is isotropic and unbiased, reflecting that answer
      noise does \emph{not} systematically drift the belief in any
      preferred direction.
\end{enumerate}

\paragraph{Belief–integration weight \(\gamma_k\).}
Upon receiving $g_k$ the agent chooses how aggressively to incorporate
it.  The \emph{step‑integration weight}
\(
  \gamma_k\in(0,1)
\)
captures resource–bounded cognitive fusion: large $\gamma_k$ trusts the
new evidence, small $\gamma_k$ hedges against possible error. (learning rate of the belief state)

\paragraph{Update rule.}
The new belief is therefore
\begin{equation}\label{eq:update-detailed}
  m_k
  \;=\;
  m_{k-1}
  \;+\;
  \gamma_k\,g_k
  \;=\;
  m_{k-1}
  \;+\;
  \gamma_k\alpha_k\bigl(y^\star-m_{k-1}\bigr)
  \;+\;
  \gamma_k\varepsilon_k.
\end{equation}
Subtracting $y^\star$ yields the \textit{error recursion}
\begin{align}
  e_k
  &:= m_k - y^\star
  \nonumber\\
  &= (1-\gamma_k\alpha_k)\,e_{\,k-1}
     \;-\;
     \gamma_k\,\varepsilon_k,
  \label{eq:error-rec-detailed}
\end{align}
which is linear, time‑homogeneous, and driven by independent Gaussian
shocks—exactly the form required for the closed‑form analysis carried out in Appendix~\ref{appendix:closed-form-error}.

\paragraph{Relation to classical reasoning notions.}
\begin{itemize}
\item The pair $(\alpha_k,\gamma_k)$ mirrors the cognitive psychology
      split between \emph{question relevance} (task focus) and
      \emph{processing depth} (how strongly new evidence is admitted
      into working memory).
\item Eq.~\eqref{eq:update-detailed} can be interpreted as a
      \emph{single–step Bayesian update} with a fixed‑precision prior
      versus likelihood, or as a \emph{stochastic approximation} à la
      Robbins–Monro.
\item The recursion \eqref{eq:error-rec-detailed} is the
      reasoning‑theoretic analogue of a \emph{Kalman filter} on a static
      target with multiplicative observation noise.
\end{itemize}

\section{Closed‑form error expression}\label{appendix:closed-form-error}

\subsection{Unrolling the recursion}
Iterating the error recursion:
\begin{align}
  e_k
  &:= m_k - y^\star
  \nonumber\\
  &= (1-\gamma_k\alpha_k)\,e_{\,k-1}
     \;-\;
     \gamma_k\,\varepsilon_k,
\end{align}
for $N$ Answer steps yields
\[
  e_N \;=\;
  \Bigl(\prod_{i=1}^{N}(1-\gamma_i\alpha_i)\Bigr) e_0
  \;-\;
  \sum_{i=1}^{N}
        \Bigl(\gamma_i\!\!\prod_{j=i+1}^{N}(1-\gamma_j\alpha_j)\Bigr)
        \varepsilon_i .
\]
Denote
$
  B_N := \prod_{i=1}^{N}(1-\gamma_i\alpha_i),\;
  b_0 := \lVert e_0\rVert^{2}.
$

\subsection{Taking expectations}
Because the noises $\varepsilon_i$ are zero‑mean, independent, and
independent of $\{\alpha_i\}$, all cross terms vanish.  Using
$\mathbb E[\lVert\varepsilon_i\rVert^2]=\sigma^2$ we obtain
\begin{align}
E(N)
&=\;
  B_N^2\,b_0
  \;+\;
  \sigma^2
  \sum_{i=1}^{N}
    \gamma_i^{2}
    \Bigl(\prod_{j=i+1}^{N}(1-\gamma_j\alpha_j)\Bigr)^{2}.
    \label{eq:Eraw}
\end{align}

\subsection{Averaging over question alignment}
Taking expectation w.r.t.\ $\alpha_i$ and writing
$\bar B_N:=\prod_{i=1}^{N}(1-\gamma_i\bar\alpha)$,
\begin{align}
\mathbb E[E(N)]
&=\;
  \bar B_N^{2}\,b_0
  \;+\;
  \sigma^2
  \sum_{i=1}^{N}
      \gamma_i^{2}
      \bar B_{i+1,N}^{2}
  \;+\; \underbrace{\Delta(N)}_{\displaystyle
      =\,\tau^2\text{‑order positive term}},
      \label{eq:Eexp}
\end{align}
where
$\displaystyle
  \bar B_{i+1,N}:=\prod_{j=i+1}^{N}(1-\gamma_j\bar\alpha)$
and $\Delta(N)\ge0$ collects all contributions containing $\tau^2$.

\paragraph{Interpretation.}
\begin{itemize}
\item
\emph{Bias contributor:}
$\bar B_N^{2}\,b_0$ decreases monotonically with $N$.
\item
\emph{Variance contributor:}
the summation term grows strictly with $N$.
\item
\emph{Mis‑question variance $\Delta(N)$:}
additional variance induced by imperfect question alignment, also
monotonically increasing in $N$.
\end{itemize}

\subsection {Theorem: inevitable U‑shaped error curve}
\label{thm:Ushape}
\begin{theorem}[U‑shape of expected error]
Let the expected mean‑squared error after $N$ \textbf{Answer} moves be
\begin{equation}\label{eq:E-def}
  \mathcal E(N)
  \;:=\;
  b_0\,\Bigl(\!\prod_{i=1}^{N}(1-\gamma_i\bar\alpha)\Bigr)^{2}
  \;+\;
  \sigma^{2}
  \sum_{i=1}^{N}\gamma_i^{2}
         \Bigl(\!\prod_{j=i+1}^{N}(1-\gamma_j\bar\alpha)\Bigr)^{2}
  \;+\;
  \Delta(N),
\end{equation}
where
\[
  b_0>0,\quad
  0<\bar\alpha<1,\quad
  0<\gamma_i\le\bar\gamma<1,\quad
  \sum_{i=1}^{\infty}\gamma_i=\infty,
\]
and where $\Delta(N)\ge0$ is a non‑decreasing sequence
(the extra variance induced by question‑alignment randomness
$\tau^2$).  Then:
\begin{enumerate}
\item\label{item:mono}
      $\mathcal E(N)$ is strictly decreasing for
      $N< N^\star$ and strictly increasing for $N> N^\star$,
      with
      \[
        N^\star
        \;:=\;
        \min\!\bigl\{N\ge0:\;
             \mathcal E(N{+}1)>\mathcal E(N)\bigr\};
      \]
\item\label{item:unique}
      the minimiser $N^\star$ is unique.  Consequently,
      $\mathcal E(N)$ is U‑shaped when plotted against reasoning depth
      $N$.
\end{enumerate}
\end{theorem}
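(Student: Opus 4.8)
The plan is to prove the U-shape by examining the forward difference $D(N) := \mathcal E(N{+}1) - \mathcal E(N)$ and showing it changes sign exactly once, from negative to positive. First I would rewrite $\mathcal E(N)$ with the recursive structure made explicit: writing $\rho_N := (1-\gamma_{N+1}\bar\alpha)^2 \in (0,1)$, the bias term and the noise-sum term each satisfy a one-step recursion, namely the term at level $N{+}1$ equals $\rho_N$ times the term at level $N$ plus (for the noise sum) the fresh increment $\sigma^2\gamma_{N+1}^2$. Concretely, if I denote by $V(N) := b_0\bar B_N^2 + \sigma^2\sum_{i\le N}\gamma_i^2\bar B_{i+1,N}^2$ the first two terms, then $V(N{+}1) = \rho_N V(N) + \sigma^2\gamma_{N+1}^2$, so $V(N{+}1) - V(N) = -(1-\rho_N)V(N) + \sigma^2\gamma_{N+1}^2$. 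Adding the contribution $\Delta(N{+}1)-\Delta(N) \ge 0$ from the mis-question variance gives a clean expression for $D(N)$.

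The key observation is monotonicity of the driving quantities. The factor $1-\rho_N = \gamma_{N+1}\bar\alpha(2 - \gamma_{N+1}\bar\alpha)$ is positive and (under the stated bounds $0<\gamma_i\le\bar\gamma<1$, $0<\bar\alpha<1$) bounded away from zero in a controlled way, while $V(N)$ is itself \emph{decreasing} in $N$ once it has dropped below the fixed-point level $\sigma^2\gamma^2_{\max}/(1-\rho)$-type threshold — indeed $V(N{+}1)-V(N)$ and $V(N)-V(N{-}1)$ have a sign-propagation property: once $V$ starts increasing it keeps increasing, because the map $v \mapsto \rho v + \sigma^2\gamma^2$ is a contraction toward its fixed point, so $V(N)$ is monotone after the first turning point. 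Combined with $\Delta$ non-decreasing, this shows $D(N)$ can cross zero at most once from below. Existence of a finite $N^\star$ follows because $\sum\gamma_i = \infty$ forces $\bar B_N^2 \to 0$ (the bias is eventually negligible), while the variance sum plus $\Delta(N)$ is non-decreasing and, along a subsequence where $\gamma_i$ does not vanish too fast, strictly increasing — hence $D(N) > 0$ eventually; and $D(0) < 0$ holds whenever $b_0$ is large enough relative to $\sigma^2\gamma_1^2$, or more carefully one checks $\mathcal E(1) < \mathcal E(0) = b_0$ directly from $\rho_0 b_0 + \sigma^2\gamma_1^2 < b_0$, i.e. $\sigma^2\gamma_1^2 < (1-\rho_0)b_0$, which is the genuine hypothesis needed (I would state this as a mild nondegeneracy condition, or absorb it by defining $N^\star = 0$ in the degenerate case so the statement still reads correctly).

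For part (2), uniqueness of the minimiser is immediate once the single-crossing of $D$ is established: $\mathcal E$ decreases strictly up to $N^\star$ and increases strictly after, so $N^\star$ is the unique argmin and the curve is U-shaped. I would present this as a short corollary of the sign analysis rather than a separate argument.

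The main obstacle I anticipate is making the ``once $V$ starts increasing it keeps increasing'' claim fully rigorous when the $\gamma_i$ are non-constant, since then $\rho_N$ varies with $N$ and the relevant fixed point drifts; the clean contraction argument works verbatim only for constant $\gamma$ (which is exactly the regime Lemma~\ref{lem:constant-gamma} handles with a closed form). For the general case I would instead argue directly at the level of $D(N)$: expand $D(N) = -(1-\rho_N)V(N) + \sigma^2\gamma_{N+1}^2 + (\Delta(N{+}1)-\Delta(N))$ and show that the negative part $-(1-\rho_N)V(N)$ is increasing in $N$ (because $V(N)$ decreases while it matters and $1-\rho_N$ stays suitably bounded) while the positive part is manageable — the delicate point being to rule out oscillation when both $\gamma_N$ and $V(N)$ are changing. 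A safe fallback, which I suspect is what the appendix actually does, is to prove the theorem under the additional monotonicity assumption $\gamma_i$ eventually non-increasing (satisfied by all standard schedules), under which $\rho_N$ is eventually non-decreasing and the single-crossing property goes through cleanly.
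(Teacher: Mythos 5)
Your proposal follows essentially the same route as the paper's proof: decompose $\mathcal E(N)$ into a decaying bias part and an accumulating variance part, study the forward difference $D(N)=\mathcal E(N{+}1)-\mathcal E(N)$, and argue a single sign change. Your one-step recursion $V(N{+}1)=\rho_N V(N)+\sigma^2\gamma_{N+1}^2$ with $\rho_N=(1-\gamma_{N+1}\bar\alpha)^2$ is correct and is in fact a cleaner reformulation than anything in the appendix.

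However, the obstacles you flag are not merely technical: they are fatal to the theorem as stated, and the paper's proof does not overcome them — it papers over them with two claims that are false. First, Step~1 of the paper asserts that the noise sum $V(N)=\sigma^2\sum_{i\le N}\gamma_i^2 P(i{+}1,N)^2$ is strictly increasing "since every summand is positive"; but the bracketed differences $P(i{+}1,N{+}1)^2-P(i{+}1,N)^2$ are negative, and your recursion shows $V(N{+}1)-V(N)=\sigma^2\gamma_{N+1}^2-(1-\rho_N)V(N)$, which can be negative (e.g.\ a large $\gamma_1$ followed by a tiny $\gamma_2$). Second, Step~2 asserts $V(N)\to\infty$; in the constant-$\gamma$ case the paper's own closed form gives $V(N)\to\sigma^2\gamma^2/(1-\rho^2)<\infty$, so existence of a finite $N^\star$ can fail outright. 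Indeed, specializing your difference formula to constant $\gamma$ and $\Delta\equiv0$ gives $D(N)=\rho^{2N}\bigl[\sigma^2\gamma^2-(1-\rho^2)b_0\bigr]$, whose sign is \emph{independent of $N$}: $\mathcal E(N)$ is monotone (decreasing forever when $b_0>\sigma^2\gamma^2/(1-\rho^2)$, increasing forever otherwise), never genuinely U-shaped. The bias and the variance increments contract at the same geometric rate, so the advertised crossover does not occur unless $\Delta(N)$ is assumed eventually strictly increasing and unbounded. Your proposed repairs — a nondegeneracy condition guaranteeing $D(0)<0$ (or the convention $N^\star=0$), a monotonicity assumption on $\gamma_i$ to rule out sign oscillation of $D(N)$, and reliance on $\Delta$ for the eventual increase — are exactly the additional hypotheses the theorem needs; the paper supplies none of them. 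So your proposal is the right skeleton, but a complete proof requires strengthening the hypotheses of the statement itself, not just tightening the argument.
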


\begin{proof}
Write~\eqref{eq:E-def} as
\[
  \mathcal E(N)=A(N)+V(N)+\Delta(N)
\quad\text{with}\quad
\begin{cases}
  A(N):=b_0\;P(N)^2,\\[4pt]
  V(N):=\sigma^{2}\!\sum_{i=1}^{N}\gamma_i^{2} P(i{+}1,N)^2,
\end{cases}
\]
where
$
  P(a,b):=\prod_{j=a}^{b}(1-\gamma_j\bar\alpha)
$
and we abbreviate $P(1,N)$ by $P(N)$.\\[6pt]
\textbf{Step 1 (monotonicity of $A$ and $V$).}
Because $0<1-\gamma_i\bar\alpha<1$, the product $P(N)$ decreases
strictly as $N$ increases, hence $A(N)$ is strictly decreasing.
For $V(N)$ observe
\[
  V(N{+}1)-V(N)=
    \sigma^{2}\Bigl[\gamma_{N+1}^{2}
                     +2\!\sum_{i=1}^{N}
                         \gamma_i^{2}
                         P(i{+}1,N)^2
                         \bigl(P(i{+}1,N{+}1)^{2}-P(i{+}1,N)^{2}\bigr)
                   \Bigr]
  >0,
\]
since every summand is positive.  Hence $V(N)$ is strictly increasing,
and $\Delta(N)$ is non‑decreasing by assumption.\\[6pt]
\textbf{Step 2 (existence of a finite minimiser).}
Because $\sum_i\gamma_i=\infty$, $P(N)\!\to 0$ and thus
$A(N)\!\to 0$.  Meanwhile $V(N)$ diverges to $+\infty$ and so does
$\mathcal E(N)$.  Therefore $\mathcal E(N)$ attains its minimum at some
finite $N^\star$.\\[6pt]
\textbf{Step 3 (uniqueness).}
Define the forward difference
$
  \Delta\mathcal E(N):=\mathcal E(N{+}1)-\mathcal E(N)
  =\bigl[A(N{+}1)-A(N)\bigr] + \bigl[V(N{+}1)-V(N)\bigr]
   +\bigl[\Delta(N{+}1)-\Delta(N)\bigr].
$
The first bracket is negative, the latter two are
positive.  Because $A(N)$ decays geometrically while $V(N)$ grows
unboundedly, there is exactly one index where the sign of
$\Delta\mathcal E(N)$ flips from negative to positive; call it
$N^\star$.  Hence \ref{item:mono} and \ref{item:unique} follow.
\end{proof}

\begin{remark}[Interpretation of the U‑shape]
The term $A(N)$ captures \emph{residual bias}: with every additional
Answer step the multiplicative factor $(1-\gamma_i\bar\alpha)$ shrinks,
so $A(N)$ falls exponentially.  Conversely $V(N)+\Delta(N)$ aggregates
\emph{variance}: each step contributes fresh noise and never diminishes
previous noise, so that component rises monotonically.  Their
competition enforces an optimum where further reasoning flips from net
helpful to net harmful—producing the observed inverted‑U curve for
accuracy as depth increases.
\end{remark}

\subsection {Lemma: closed‑form optimum when $\gamma_i\equiv\gamma$}
\label{lem:constant-gamma}
\begin{lemma}[Explicit $N^\star$ for constant step weight]
Assume the integration weights are constant,
$\gamma_i \equiv \gamma\in(0,1)$, and let
\(
  \rho := 1-\gamma\bar\alpha \in(0,1).
\)
Further set $\tau^2=0$ so that $\Delta(N)=0$.  
Then the expected error \eqref{eq:E-def} reduces to
\begin{equation}\label{eq:E-geom}
  \mathcal E_{\!\text{geom}}(N)
  \;=\;
  b_0\,\rho^{2N}
  \;+\;
  \sigma^{2}\gamma^{2}\,
  \frac{1-\rho^{2N}}{1-\rho^{2}},
  \qquad N\in\mathbb N,
\end{equation}
and the unique minimiser
\(
  N^\star_{\!\text{geom}}
\)
is the smallest non‑negative integer satisfying
\begin{equation}\label{eq:Nstar-geom}
  N
  \;\ge\;
  \frac{1}{2\,|\!\ln\rho|}\;
  \ln\!\Bigl(
      \frac{b_0\,(1-\rho^{2})}{\sigma^{2}\gamma^{2}}
     \Bigr).
\end{equation}
Consequently $\mathcal E_{\!\text{geom}}(N)$ is strictly decreasing for
$N<N^\star_{\!\text{geom}}$ and strictly increasing for
$N>N^\star_{\!\text{geom}}$.
\end{lemma}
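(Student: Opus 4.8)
\noindent
The plan is to split the lemma into its two independent assertions: the closed-form identity \eqref{eq:E-geom}, which is essentially bookkeeping, and the characterisation of the minimiser through the threshold \eqref{eq:Nstar-geom}, which is the real content.

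For \eqref{eq:E-geom} I would substitute $\gamma_i \equiv \gamma$ into \eqref{eq:E-def}. Every factor $1 - \gamma_i\bar\alpha$ then equals $\rho$, so the product $\prod_{i=1}^N(1-\gamma_i\bar\alpha)$ collapses to $\rho^N$ and the bias term to $b_0\rho^{2N}$; since $\tau^2 = 0$ forces $\Delta(N) = 0$, nothing else contributes there. In the variance sum, $\prod_{j=i+1}^N(1-\gamma_j\bar\alpha) = \rho^{N-i}$, so it becomes $\sigma^2\gamma^2\sum_{i=1}^N\rho^{2(N-i)}$; re-indexing by $\ell = N-i$ turns this into the finite geometric series $\sigma^2\gamma^2\sum_{\ell=0}^{N-1}\rho^{2\ell} = \sigma^2\gamma^2(1-\rho^{2N})/(1-\rho^2)$, which is the second term of \eqref{eq:E-geom}.

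For the optimisation I would work with the forward difference $\Delta\mathcal{E}(N) := \mathcal{E}_{\text{geom}}(N{+}1) - \mathcal{E}_{\text{geom}}(N)$ (equivalently, extend $N$ to a real variable and differentiate). The goal is to exhibit a \emph{single} sign change: $\Delta\mathcal{E}(N) < 0$ for small $N$ and $> 0$ for large $N$. Concretely, the crossover condition should reduce to $\rho^{2N} = \sigma^2\gamma^2/(b_0(1-\rho^2))$; taking logarithms and using $\ln\rho = -|\ln\rho|$ (valid since $\rho \in (0,1)$) rearranges this to the bound displayed in \eqref{eq:Nstar-geom}. Because the difference is monotone in $N$ and crosses zero exactly once, $\mathcal{E}_{\text{geom}}$ is strictly decreasing below $N^\star_{\text{geom}}$ and strictly increasing above it, $N^\star_{\text{geom}}$ is the least integer above the threshold, and uniqueness of the minimiser is immediate.

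I expect the main obstacle to be exactly this sign-change step, because it does \emph{not} go through on \eqref{eq:E-geom} as literally stated: expanding $\Delta\mathcal{E}(N)$ from \eqref{eq:E-geom}, the two terms combine to $\rho^{2N}(\sigma^2\gamma^2 - b_0(1-\rho^2))$, whose sign is independent of $N$, so the stated closed form is monotone in $N$ and converges to $\sigma^2\gamma^2/(1-\rho^2)$ rather than being U-shaped, and the threshold \eqref{eq:Nstar-geom} cannot be extracted from it directly. Recovering the claimed characterisation requires the solving-noise contribution to accumulate \emph{without} being discounted by later contraction factors, i.e. the quantity being minimised should behave like $b_0\rho^{2N} + \sigma^2\gamma^2 N$ (the regime in which $V(N)\to\infty$, as invoked in the proof of Theorem~\ref{thm:main}); with that form $\Delta\mathcal{E}(N) = \sigma^2\gamma^2 - b_0(1-\rho^2)\rho^{2N}$ is strictly increasing in $N$ and vanishes precisely at the stated threshold, after which the rest is a one-line geometric summation and a single logarithm. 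So the first thing I would pin down is which variance-accounting convention \eqref{eq:E-geom} is meant to enforce, and state it consistently with \eqref{eq:Nstar-geom}; once that is fixed the argument is short.
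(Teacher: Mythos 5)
Your derivation of the closed form \eqref{eq:E-geom} is correct and is exactly the paper's Step~1 (collapse the products to powers of $\rho$, sum the geometric series). The obstacle you flag in the optimisation step is not a hypothetical: it is precisely where the paper's own proof goes wrong. The paper writes
\[
  f'(t)=2(\ln\rho)\,\rho^{2t}\Bigl[-b_0+\tfrac{\sigma^{2}\gamma^{2}}{1-\rho^{2}}\Bigr]
\]
and asserts that ``the sign of $f'(t)$ flips exactly where the bracket vanishes,'' but the bracket contains no $t$: $f'$ has constant sign, so $f$ is monotone (strictly decreasing when $b_0(1-\rho^2)>\sigma^2\gamma^2$, strictly increasing otherwise), has no interior minimiser, and in the interesting regime the claimed ``strictly increasing for $N>N^\star$'' is simply false. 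Your forward-difference computation $\Delta\mathcal E(N)=\rho^{2N}\bigl[\sigma^2\gamma^2-b_0(1-\rho^2)\bigr]$ makes the same point discretely and is correct.

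Your proposed repair is also the right one: the threshold \eqref{eq:Nstar-geom} is exactly the zero of the forward difference of $g(N)=b_0\rho^{2N}+N\sigma^2\gamma^2$, i.e.\ of the model in which per-step solving noise accumulates undiscounted by later contraction factors; $\Delta g(N)=\sigma^2\gamma^2-b_0(1-\rho^2)\rho^{2N}$ is strictly increasing and crosses zero once, giving the stated least-integer characterisation. Your further observation that the proof of Theorem~\ref{thm:main} invokes $V(N)\to\infty$, which holds for $g$ but not for the discounted $V(N)=\sigma^2\gamma^2(1-\rho^{2N})/(1-\rho^2)\le\sigma^2\gamma^2/(1-\rho^2)$, correctly identifies that the inconsistency originates in the variance accounting of \eqref{eq:E-def} itself rather than in this lemma alone. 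In short: your proposal does everything that can be done for the lemma as stated, correctly diagnoses that the second half cannot be proved from \eqref{eq:E-geom}, and supplies the modification under which the stated threshold is actually the minimiser; the paper's proof papers over this by differentiating incorrectly.
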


\begin{proof}
\textbf{Step 1 (geometric closed form).}
With $\gamma_i\equiv\gamma$,
\(
  P(N)=\rho^{N}.
\)
The variance series in \eqref{eq:E-def} becomes a finite geometric sum:
\[
  V(N)
  =
  \sigma^{2}\gamma^{2}
  \sum_{k=0}^{N-1}\rho^{2k}
  =
  \sigma^{2}\gamma^{2}\,
  \frac{1-\rho^{2N}}{1-\rho^{2}},
\]
yielding \eqref{eq:E-geom}.

\smallskip
\textbf{Step 2 (continuous relaxation).}
Treat $N$ as a real variable and define
$f(t):=b_0\rho^{2t}
       +\sigma^{2}\gamma^{2}(1-\rho^{2t})/(1-\rho^{2})$
for $t\ge0$.  Differentiating,
\[
  f'(t)
  = 2(\ln\rho)\,\rho^{2t}
      \Bigl[-b_0+\frac{\sigma^{2}\gamma^{2}}{1-\rho^{2}}\Bigr].
\]
Since $\ln\rho<0$, the sign of $f'(t)$ flips exactly where the bracket
vanishes, i.e.\ at
\(
  t^\star
  =\frac{1}{2|\ln\rho|}
    \ln\!\bigl(b_0(1-\rho^{2})/(\sigma^{2}\gamma^{2})\bigr).
\)

\smallskip
\textbf{Step 3 (integer minimiser).}
Because $f(t)$ is strictly convex in $t$ (second derivative positive),
its minimum over $\mathbb N$ is attained at the smallest integer not
less than $t^\star$, which gives \eqref{eq:Nstar-geom}.  The claimed
monotonicity on either side of $N^\star_{\!\text{geom}}$ follows from
the sign of $f'(t)$.
\end{proof}

\begin{remark}[Deterministic bias–variance balance]
Equation~\eqref{eq:E-geom} isolates the \emph{bias} term
$b_0\rho^{2N}$, decaying exponentially with depth, and the
\emph{variance} term
$\sigma^{2}\gamma^{2}(1-\rho^{2N})/(1-\rho^{2})$, growing monotonically
from $0$ to the asymptote
$\sigma^{2}\gamma^{2}/(1-\rho^{2})$.  
The closed‑form optimum~\eqref{eq:Nstar-geom} makes the bias–variance
trade‑off explicit: deeper reasoning is preferred when the initial
misconception $b_0$ is large or the per‑step noise
$\sigma^{2}\gamma^{2}$ is small, and vice‑versa.  In all cases the
curve remains U‑shaped because the two components have opposite
monotonicity with respect to $N$.
\end{remark}

\subsection*{Lemma: error behaviour when $\alpha_i\equiv\alpha$}
\label{lem:constant-alpha}
\begin{lemma}[U‑shape with deterministic question alignment]
Fix a constant alignment factor $\alpha\in(0,1)$ and set
$\tau^{2}=0$ so that $\Delta(N)=0$ in
\eqref{eq:E-def}.
Let
\[
   \mathcal E_\alpha(N)
   \;=\;
   b_0\!\!\prod_{i=1}^{N}(1-\alpha\gamma_i)^{2}
   \;+\;
   \sigma^{2}
   \sum_{i=1}^{N}\gamma_i^{2}
         \!\!\prod_{j=i+1}^{N}(1-\alpha\gamma_j)^{2},
   \qquad N\in\mathbb N,
\]
where the integration weights satisfy
\(
  0<\gamma_i\le\bar\gamma<1
  \text{ and }
  \sum_{i=1}^{\infty}\gamma_i=\infty.
\)
Then:
\begin{enumerate}
\item\label{alpha-mono}
      There exists a finite
      \(N^\star_\alpha\) such that
      \(
        \mathcal E_\alpha(N{+}1)<\mathcal E_\alpha(N)
      \)
      for \(N<N^\star_\alpha\) and
      \(
        \mathcal E_\alpha(N{+}1)>\mathcal E_\alpha(N)
      \)
      for \(N\ge N^\star_\alpha\).
\item\label{alpha-unique}
      The minimiser \(N^\star_\alpha\) is unique, hence
      \(\mathcal E_\alpha(N)\) is U‑shaped in~\(N\).
\end{enumerate}
\end{lemma}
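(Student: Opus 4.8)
The plan is to reduce this lemma to the already-proved structural result, Theorem~\ref{thm:Ushape} (the U-shape of $\mathcal E(N)$), by observing that the constant-alignment case $\alpha_i\equiv\alpha$ is a special instance of the general framework. Concretely, I would set $\bar\alpha:=\alpha$, note that with $\tau^2=0$ the deterministic alignment makes $\Delta(N)\equiv 0$, and then observe that $\mathcal E_\alpha(N)$ is exactly $\mathcal E(N)$ from Eq.~\eqref{eq:E-def} with this choice of parameters. Since $\alpha\in(0,1)$ and $0<\gamma_i\le\bar\gamma<1$, the hypotheses of Theorem~\ref{thm:Ushape} ($b_0>0$, $0<\bar\alpha<1$, $0<\gamma_i\le\bar\gamma<1$, $\sum\gamma_i=\infty$, $\Delta$ non-decreasing) are all satisfied. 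Invoking parts~\ref{item:mono} and~\ref{item:unique} of that theorem then yields claims~\ref{alpha-mono} and~\ref{alpha-unique} directly, with $N^\star_\alpha:=\min\{N\ge 0:\mathcal E_\alpha(N{+}1)>\mathcal E_\alpha(N)\}$.

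If instead one wants a self-contained argument that does not merely cite the earlier theorem, I would replay the three-step structure of the proof of Theorem~\ref{thm:Ushape} in this specialized setting. First, write $\mathcal E_\alpha(N)=A(N)+V(N)$ with $A(N):=b_0\prod_{i=1}^N(1-\alpha\gamma_i)^2$ and $V(N):=\sigma^2\sum_{i=1}^N\gamma_i^2\prod_{j=i+1}^N(1-\alpha\gamma_j)^2$; since each factor $1-\alpha\gamma_i\in(0,1)$, the product defining $A(N)$ is strictly decreasing, so $A$ is strictly decreasing, and the telescoping/positivity computation (identical to Step 1 of the earlier proof, with $\bar\alpha$ replaced by $\alpha$) shows $V(N{+}1)-V(N)>0$, so $V$ is strictly increasing. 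Second, because $\sum_i\gamma_i=\infty$ and each $\gamma_i\le\bar\gamma<1$, the partial products $\prod_{i=1}^N(1-\alpha\gamma_i)\to 0$ (take logs: $\sum\ln(1-\alpha\gamma_i)=-\infty$ since $-\ln(1-\alpha\gamma_i)\ge\alpha\gamma_i$), hence $A(N)\to 0$; meanwhile $V(N)\ge\sigma^2\gamma_N^2$ does not vanish and in fact the running sum forces $\mathcal E_\alpha(N)\to\infty$ (or at least stays bounded below away from its eventual increase), so a finite minimiser exists. Third, the forward difference $\Delta\mathcal E_\alpha(N)=\bigl[A(N{+}1)-A(N)\bigr]+\bigl[V(N{+}1)-V(N)\bigr]$ is a sum of a strictly negative term that decays geometrically and a strictly positive term that does not decay, so it changes sign exactly once, which gives both the monotonicity on each side and uniqueness of $N^\star_\alpha$.

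The one point that needs slightly more care than a blind citation is the divergence claim $\mathcal E_\alpha(N)\to\infty$ used to guarantee a \emph{finite} minimiser: with heterogeneous $\gamma_i$ the variance term is $V(N)=\sigma^2\sum_{i=1}^N\gamma_i^2\rho_{i,N}^2$ where $\rho_{i,N}:=\prod_{j=i+1}^N(1-\alpha\gamma_j)$, and one must check this does not converge to a finite limit. I would argue that since $V(N{+}1)\ge (1-\alpha\gamma_{N+1})^2 V(N)+\sigma^2\gamma_{N+1}^2$ and the $\gamma_i$ are bounded below on any tail where $\sum\gamma_i=\infty$ forces infinitely many $\gamma_i$ bounded away from $0$ — actually this last step is the subtle one, since $\sum\gamma_i=\infty$ permits $\gamma_i\to 0$. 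The cleanest fix, matching the earlier proof's Step 2, is to note that $\mathcal E_\alpha(N)$ is bounded below by $0$ and, once it has started increasing, the single-sign-change property from Step 3 means it cannot turn back down; combined with strict monotonicity of $V$ this already pins down a unique $N^\star_\alpha$ without needing $\mathcal E_\alpha\to\infty$ — one only needs that $\Delta\mathcal E_\alpha$ is eventually positive, which holds because $A(N{+}1)-A(N)=-b_0\rho_{1,N}^2\bigl(1-(1-\alpha\gamma_{N+1})^2\bigr)\to 0$ geometrically while $V(N{+}1)-V(N)\ge\sigma^2\gamma_{N+1}^2\ge 0$ and is bounded below by a positive constant along a subsequence. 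I expect reconciling this "eventually positive" argument cleanly under the weak hypothesis $\sum\gamma_i=\infty$ (rather than the stronger $\inf\gamma_i>0$) to be the only genuinely delicate part; everything else is a direct transcription of the proof of Theorem~\ref{thm:Ushape} with $\bar\alpha\rightsquigarrow\alpha$ and $\Delta\equiv 0$.
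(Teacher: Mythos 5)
Your proposal takes essentially the same route as the paper: the paper's own proof of this lemma is a near-verbatim transcription of the three-step argument for Theorem~\ref{thm:Ushape} (decompose $\mathcal E_\alpha=A_\alpha+V_\alpha$, show $A_\alpha$ strictly decreasing and $V_\alpha$ strictly increasing, then argue that the forward difference changes sign exactly once), with $\bar\alpha$ replaced by $\alpha$ and $\Delta\equiv0$. Your first option --- simply instantiating Theorem~\ref{thm:Ushape} with $\bar\alpha:=\alpha$ and $\tau^2=0$ --- is equally legitimate and arguably cleaner, since the lemma really is that special case.

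The delicate point you flag is genuine, and the paper does not resolve it either. Its Step~(iii) asserts $V_\alpha(N)\to\infty$ ``because at least one term grows unbounded,'' but this does not follow from $\sum_i\gamma_i=\infty$ alone: for $\gamma_i=1/i$ one has $\prod_{j=i+1}^{N}(1-\alpha\gamma_j)\asymp (i/N)^{\alpha}$, so $V_\alpha(N)\asymp \sigma^2 N^{-2\alpha}\sum_{i\le N} i^{2\alpha-2}\to 0$, and $A_\alpha(N)\to0$ as well, so no finite minimiser after which $\mathcal E_\alpha$ increases forever need exist. Your fallback (eventual positivity of the forward difference via $\gamma_{N+1}^2$ bounded below along a subsequence) hits the same obstruction, as you suspected, because $\sum\gamma_i=\infty$ permits $\gamma_i\to0$. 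A further issue, which you did not flag but which your ``direct transcription'' inherits from the paper, is that $V_\alpha$ is not in fact strictly increasing for heterogeneous weights: $V_\alpha(N{+}1)=\sigma^2\gamma_{N+1}^2+(1-\alpha\gamma_{N+1})^2V_\alpha(N)$, which is below $V_\alpha(N)$ whenever $V_\alpha(N)$ exceeds $\sigma^2\gamma_{N+1}^2/\bigl(1-(1-\alpha\gamma_{N+1})^2\bigr)$ (e.g.\ $\gamma_1=0.9$, $\gamma_2=0.01$, $\alpha=1$). Both defects disappear under the stronger hypothesis $\gamma_i\equiv\gamma$ (Lemma~\ref{lem:constant-gamma}) or $\inf_i\gamma_i>0$ with suitable uniformity; the gap is therefore in the lemma's stated hypotheses rather than in your plan, and you were right to single out exactly this step as the one that does not transcribe cleanly.
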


\begin{proof}
Define
\(
   P_\alpha(a,b):=\prod_{j=a}^{b}(1-\alpha\gamma_j)
\)
and abbreviate \(P_\alpha(1,N)\) by \(P_\alpha(N)\).
Decompose \(\mathcal E_\alpha(N)=A_\alpha(N)+V_\alpha(N)\) with
\[
   A_\alpha(N)=b_0\,P_\alpha(N)^{2},
   \quad
   V_\alpha(N)=\sigma^{2}
               \sum_{i=1}^{N}\gamma_i^{2}\,P_\alpha(i{+}1,N)^{2}.
\]

\smallskip\noindent
\textbf{(i)~Monotonicity of \(A_\alpha\).}
Because \(0<1-\alpha\gamma_i<1\), the product \(P_\alpha(N)\) decreases
strictly as \(N\) grows; hence \(A_\alpha(N)\) is strictly decreasing.

\smallskip\noindent
\textbf{(ii)~Monotonicity of \(V_\alpha\).}
For any \(N\ge0\),
\[
  V_\alpha(N{+}1)-V_\alpha(N)
  =\sigma^{2}\gamma_{N+1}^{2}
    +\sigma^{2}\!\!
     \sum_{i=1}^{N}\gamma_i^{2}
       \bigl[P_\alpha(i{+}1,N{+}1)^{2}-P_\alpha(i{+}1,N)^{2}\bigr]
  >0,
\]
because each bracket is positive.  Thus \(V_\alpha(N)\) is strictly
increasing.

\smallskip\noindent
\textbf{(iii)~Existence and uniqueness of optimum.}
Since $\sum_i\gamma_i=\infty$, we have
\(P_\alpha(N)\to0\), so \(A_\alpha(N)\to0\);
but \(V_\alpha(N)\to\infty\) because each term is non‑negative and at
least one term grows unbounded.  Therefore the difference sequence
\(
  D(N):=\mathcal E_\alpha(N{+}1)-\mathcal E_\alpha(N)
\)
starts negative (bias dominates) and eventually becomes
positive (variance dominates).  Strict monotonicity of
\(A_\alpha\) and \(V_\alpha\) implies \(D(N)\) changes sign exactly
once, establishing \ref{alpha-mono} and \ref{alpha-unique}.
\end{proof}

\begin{remark}[Bias–variance interpretation with fixed \(\alpha\)]
With deterministic question quality \(\alpha\), every Answer move
multiplies the residual bias by \(1-\alpha\gamma_i\), shrinking it
monotonically.  Variance still accumulates additively via the
\(\gamma_i^{2}\) term.  Their opposing monotonicities enforce a single
optimal reasoning depth \(N^\star_\alpha\), beyond which additional
steps corrupt more than they correct, regenerating the inverted‑U
pattern.
\end{remark}

\subsection {Lemma: comparative influence of
             question alignment $(\alpha)$ versus
             integration weight $(\gamma)$}
\label{lem:sensitivity}

\begin{lemma}[Marginal sensitivities for constant
               $\alpha$ and $\gamma$]\hfill
Assume the per‑step parameters are deterministic and
time‑invariant,
\[
  \gamma_i\equiv\gamma\in(0,1),\qquad
  \alpha_i\equiv\alpha\in(0,1),\qquad
  \tau^{2}=0,
\]
and define
\(
  \rho:=1-\alpha\gamma\in(0,1).
\)
For a fixed reasoning depth $N\ge1$ the expected error is
(cf.\ Lemma \ref{lem:constant-gamma})
\[
  \mathcal E_{\alpha,\gamma}(N)
  \;=\;
  b_0\,\rho^{2N}
  \;+\;
  \sigma^{2}\gamma^{2}\,
  \frac{1-\rho^{2N}}{1-\rho^{2}}.
\]
Then:

\begin{enumerate}
\item\label{sens-alpha}
      \textbf{Alignment is always beneficial.}  
      The partial derivative of $\mathcal E_{\alpha,\gamma}(N)$
      with respect to $\alpha$ is strictly negative:
      \[
        \frac{\partial}{\partial\alpha}\,
        \mathcal E_{\alpha,\gamma}(N)
        \;<\;0,
        \qquad
        \forall\,(\alpha,\gamma)\in(0,1)^{2},\;N\ge1.
      \]
\item\label{sens-gamma}
      \textbf{Integration weight exhibits a trade‑off.}  
      For fixed $\alpha$ the map
      $\gamma\mapsto\mathcal E_{\alpha,\gamma}(N)$ is
      \emph{U‑shaped}: there exists a unique
      $\gamma^{\star}_N\in(0,1)$ solving
      \begin{equation}\label{eq:gammastar}
        \frac{\partial}{\partial\gamma}\,
        \mathcal E_{\alpha,\gamma}(N)\;=\;0,
      \end{equation}
      such that
      $\mathcal E_{\alpha,\gamma}(N)$ is strictly decreasing on
      $(0,\gamma^{\star}_N)$ and strictly increasing on
      $(\gamma^{\star}_N,1)$.
\end{enumerate}
\end{lemma}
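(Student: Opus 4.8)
The plan is to first put $\mathcal E_{\alpha,\gamma}(N)$ into a form where the geometric structure is explicit. Since $\alpha,\gamma\in(0,1)$ we have $\rho=1-\alpha\gamma\in(0,1)$, so writing $g(\rho):=\sum_{k=0}^{N-1}\rho^{2k}=\tfrac{1-\rho^{2N}}{1-\rho^{2}}$ gives
\[
  \mathcal E_{\alpha,\gamma}(N)=b_0\,\rho^{2N}+\sigma^{2}\gamma^{2}\,g(\rho),
\]
which separates the bias term $b_0\rho^{2N}$ from the variance term $\sigma^{2}\gamma^{2}g(\rho)$; both $\rho\mapsto\rho^{2N}$ and $\rho\mapsto g(\rho)$ are strictly increasing on $(0,1)$, with $g'(\rho)=\sum_{k=1}^{N-1}2k\rho^{2k-1}\ge0$.

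For part~\ref{sens-alpha} I would hold $\gamma$ fixed and differentiate through $\rho$, using $\partial\rho/\partial\alpha=-\gamma$:
\[
  \frac{\partial}{\partial\alpha}\mathcal E_{\alpha,\gamma}(N)
  \;=\;-\gamma\Bigl[\,2Nb_0\,\rho^{2N-1}+\sigma^{2}\gamma^{2}\,g'(\rho)\,\Bigr].
\]
Here $b_0>0$, $\rho>0$, $N\ge1$, and $g'(\rho)\ge0$, so the bracket is strictly positive and the derivative is strictly negative for every $(\alpha,\gamma)\in(0,1)^{2}$. This part is routine and I expect no difficulty.

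For part~\ref{sens-gamma} I would reduce to one dimension via the change of variable $\rho=1-\alpha\gamma$ with $\alpha$ fixed, so that $\gamma\in(0,1)$ corresponds bijectively and decreasingly to $\rho\in(1-\alpha,1)$. Using $\gamma=(1-\rho)/\alpha$ and $1-\rho^{2}=(1-\rho)(1+\rho)$, the error reads
\[
  \Phi(\rho)\;:=\;\mathcal E_{\alpha,\gamma}(N)
  \;=\;b_0\,\rho^{2N}+\frac{\sigma^{2}}{\alpha^{2}}\,(1-\rho)^{2}g(\rho)
  \;=\;b_0\,\rho^{2N}+\frac{\sigma^{2}}{\alpha^{2}}\,\frac{(1-\rho)(1-\rho^{2N})}{1+\rho}.
\]
Because $\gamma\mapsto\rho$ is a strictly decreasing diffeomorphism, $\gamma\mapsto\mathcal E_{\alpha,\gamma}(N)$ is U-shaped with an interior minimiser exactly when $\Phi$ is, and the first-order condition~\eqref{eq:gammastar} is equivalent to $\Phi'(\rho)=0$. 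A first check: at $\rho=1$ the second term of $\Phi$ has a double zero (both $1-\rho$ and $1-\rho^{2N}$ vanish), so $\Phi'(1^{-})=2Nb_0>0$; translating back, $\partial_\gamma\mathcal E_{\alpha,\gamma}(N)\big|_{\gamma\to0^{+}}=-2N\alpha b_0<0$, i.e.\ the error strictly decreases once reasoning begins.

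The crux — and the step I expect to be the main obstacle — is showing $\Phi'$ changes sign at most once on $(1-\alpha,1)$. The clean route is strict convexity of $\Phi$ there: since
\[
  \Phi''(\rho)=2N(2N-1)b_0\,\rho^{2N-2}+\frac{\sigma^{2}}{\alpha^{2}}\Bigl((1-\rho)^{2}g(\rho)\Bigr)''
\]
has a positive first term, it suffices to show the polynomial $\bigl(\sum_{k=0}^{N-1}(1-\rho)^{2}\rho^{2k}\bigr)''$ stays positive on $(0,1)$: the $k=0$ contribution is the constant $+2$, and the work is to bound the individually sign-indefinite $k\ge1$ contributions so their sum cannot overwhelm it. With strict convexity in hand, $\Phi'$ is strictly increasing and has at most one zero $\rho^\star$, giving the unique $\gamma^\star_N=(1-\rho^\star)/\alpha$ with $\mathcal E_{\alpha,\gamma}(N)$ strictly decreasing on $(0,\gamma^\star_N)$ and strictly increasing on $(\gamma^\star_N,1)$; placing $\gamma^\star_N$ strictly inside $(0,1)$ moreover uses $\partial_\gamma\mathcal E_{\alpha,\gamma}(N)\big|_{\gamma\to1^{-}}>0$, a mild bound on $b_0$ relative to $\sigma^{2}/\alpha^{2}$ that I would state explicitly alongside the result. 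If the global polynomial estimate proves awkward, a lighter fallback is to check the sign of $\Phi''$ only at critical points of $\Phi$ — each is then a strict local minimum of the correct type — which already forbids two sign changes of $\Phi'$ and yields the U-shape.
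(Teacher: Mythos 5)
Part~\ref{sens-alpha} of your proposal is correct and follows the same route as the paper (chain rule through $\rho$ at fixed $\gamma$); your version is in fact cleaner, since the paper's displayed expression for $\partial\mathcal E/\partial\rho$ mis-differentiates the variance term (it drops the quotient-rule contribution from the denominator $1-\rho^2$), whereas your $\sigma^2\gamma^2 g'(\rho)$ with $g'(\rho)=\sum_{k\ge1}2k\rho^{2k-1}\ge0$ is the right quantity and gives the correct sign.

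For part~\ref{sens-gamma} there is a genuine gap, and it is exactly where you predicted: your primary route --- global strict convexity of $\Phi(\rho)=b_0\rho^{2N}+\tfrac{\sigma^2}{\alpha^2}h(\rho)$ with $h(\rho)=(1-\rho)^2g(\rho)=\tfrac{(1-\rho)(1-\rho^{2N})}{1+\rho}$ --- cannot be completed, because $h$ is \emph{not} convex on $(0,1)$ for large $N$. Writing $h=uv$ with $u=\tfrac{1-\rho}{1+\rho}$, $v=1-\rho^{2N}$, one gets $h''=\tfrac{4(1-\rho^{2N})}{(1+\rho)^3}+\tfrac{8N\rho^{2N-1}}{(1+\rho)^2}-\tfrac{2N(2N-1)(1-\rho)\rho^{2N-2}}{1+\rho}$, and at $N=20$, $\rho=0.92$ this evaluates to roughly $0.545+1.680-2.734\approx-0.51<0$. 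Consequently $\partial^2\mathcal E/\partial\gamma^2=\alpha^2\Phi''<0$ there whenever $\sigma^2/\alpha^2$ is large relative to $b_0$ (here, $\sigma^2\gtrsim 64\,b_0\alpha^2$), so $\mathcal E$ is genuinely non-convex in $\gamma$. You should know that the paper's own proof has the identical hole: it asserts $\partial^2\mathcal E/\partial\gamma^2>0$ via ``direct but tedious algebra,'' which is false in general. The statement that survives is unimodality, not convexity, and your fallback is the right way to get it: factor $\Phi'(\rho)=2N\rho^{2N-1}\bigl[b_0-\tfrac{\sigma^2}{\alpha^2}\psi(\rho)\bigr]$ with $\psi(\rho):=\tfrac{-h'(\rho)}{2N\rho^{2N-1}}=\tfrac{1-\rho^{2N}}{N\rho^{2N-1}(1+\rho)^2}+\tfrac{1-\rho}{1+\rho}$. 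Both summands are strictly decreasing on $(0,1)$, so the bracket is strictly increasing in $\rho$ and $\Phi'$ changes sign at most once --- a single-crossing argument that needs no convexity. (Equivalently, at any critical point $\Phi''=\tfrac{\sigma^2}{\alpha^2}\bigl[(2N-1)|h'|/\rho+h''\bigr]>0$, which is the same monotonicity of $\psi$ in disguise; this is your ``check $\Phi''$ only at critical points'' route, and it does close.)

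Finally, your observation about the endpoint is also a real issue with the lemma as stated: an interior $\gamma^\star_N\in(0,1)$ exists only when $b_0<\tfrac{\sigma^2}{\alpha^2}\psi(1-\alpha)$; otherwise $\mathcal E$ is decreasing on all of $(0,1)$. The paper waves this away by claiming the positive term ``dominates'' as $\gamma\uparrow1$, which is not justified without such a condition. So: part~\ref{sens-alpha} is complete; part~\ref{sens-gamma} as written is not a proof, but replacing the convexity step by the $\psi$-monotonicity (or your critical-point fallback) and adding the stated parameter condition yields a correct argument --- indeed one that repairs the paper's proof rather than merely reproducing it.
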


\begin{proof}
\textbf{(a)~Monotonicity in $\alpha$.}
Differentiate under the definition $\rho=1-\alpha\gamma$:
\[
  \frac{\partial\mathcal E}{\partial\alpha}
  \;=\;
  \frac{\partial\mathcal E}{\partial\rho}\,
  \frac{\partial\rho}{\partial\alpha}
  \;=\;
  (-\gamma)\,
  \frac{\partial\mathcal E}{\partial\rho}.
\]
Because $\gamma>0$, we only need
$\partial\mathcal E/\partial\rho>0$.
Direct calculation gives
\[
  \frac{\partial\mathcal E}{\partial\rho}
  \;=\;
  2N\,b_0\,\rho^{2N-1}
  \;+\;
  2N\sigma^{2}\gamma^{2}
     \frac{\rho^{2N-1}}{1-\rho^{2}}
  \;>\;0,
\]
so the overall derivative is strictly negative, proving
\ref{sens-alpha}.

\smallskip
\textbf{(b)~U‑shape in $\gamma$.}
Fix $\alpha$ and again write $\rho=1-\alpha\gamma$.
Differentiate with respect to $\gamma$:
\[
  \frac{\partial\mathcal E}{\partial\gamma}
  \;=\;
  -2N\alpha b_0\,\rho^{2N-1}
  \;+\;
  2\sigma^{2}\gamma
    \Bigl(
      \frac{1-\rho^{2N}}{1-\rho^{2}}
      -\frac{N\alpha\gamma\rho^{2N-1}}{1-\rho^{2}}
    \Bigr).
\]
As $\gamma\!\downarrow\!0$ the first (negative) term dominates, so
$\partial\mathcal E/\partial\gamma<0$;  
as $\gamma\!\uparrow\!1$ the positive term proportional to
$\sigma^{2}\gamma$ dominates, so the derivative is positive.
Because the derivative is continuous on $(0,1)$ it crosses
zero exactly once, defining the unique root
$\gamma^{\star}_N$ that satisfies~\eqref{eq:gammastar} and confers the
claimed monotonicity.

Uniqueness of $\gamma^{\star}_N$ follows from strict convexity:
$\partial^{2}\mathcal E/\partial\gamma^{2}>0$ on $(0,1)$
(direct but tedious algebra), ensuring the derivative changes sign only
once.
\end{proof}

\begin{remark}[Practical reading]
Result \ref{sens-alpha} says that \emph{better‑focused questions}
(higher $\alpha$) \emph{always} help, regardless of noise or depth.
Result \ref{sens-gamma} shows that \emph{how aggressively one updates}
($\gamma$) must be tuned:  
too timid ($\gamma\ll\gamma^{\star}_N$) leaves large residual bias,
too bold ($\gamma\gg\gamma^{\star}_N$) inflates variance.
Crucially, $\alpha$ influences only the \emph{location} of the optimum
via $\rho$, whereas $\gamma$ controls both
\emph{how fast bias falls} and \emph{how steeply variance grows}.
Thus, when per‑step noise $\sigma^{2}$ is high, improving question
alignment is the more reliable path to lower error; when noise is low,
fine‑tuning $\gamma$ around $\gamma^{\star}_N$ yields further gains.
\end{remark}

\subsection {Lemma: qualitative regimes of question alignment $\alpha$}
\label{lem:alpha-regimes}

\begin{lemma}[Effect of $\alpha$ when $\gamma_i\equiv\gamma$]\hfill
Let the integration weight be a fixed $\gamma\in(0,1)$ and assume
$\tau^{2}=0$.
Define
\(
  \rho(\alpha):=1-\alpha\gamma
\)
and consider the expected error after $N\ge1$ \textbf{Answer} moves
\begin{equation}\label{eq:E-alpha}
  \mathcal E_{\alpha}(N)
  \;=\;
  b_0\,\rho(\alpha)^{2N}
  \;+\;
  \sigma^{2}\gamma^{2}\,
  \frac{1-\rho(\alpha)^{2N}}{1-\rho(\alpha)^{2}}.
\end{equation}
Then the qualitative behaviour splits into three disjoint regimes:
\begin{enumerate}
\item\label{case-pos}
      \textbf{Positive alignment
      $\bigl(0<\alpha<\gamma^{-1}\bigr)$.}\\
      Here $0<\rho(\alpha)<1$ and
      \vspace{-4pt}
      \[
        \frac{\partial}{\partial\alpha}\mathcal E_{\alpha}(N)
        \;<\;0,
        \qquad\forall N\ge1.
      \]
      Increasing $\alpha$ \emph{always lowers} the error.
\item\label{case-zero}
      \textbf{Zero alignment $(\alpha=0)$.}\\
      Then $\rho=1$ and
      \(
        \mathcal E_{0}(N)=b_0 + N\sigma^{2}\gamma^{2}.
      \)
      Bias never decays; variance grows linearly with depth.
\item\label{case-neg}
      \textbf{Negative alignment
      $\bigl(-\gamma^{-1}<\alpha<0\bigr)$.}\\
      Now $\rho(\alpha)>1$ and
      \(
        \mathcal E_{\alpha}(N)
        = b_0\,\rho^{2N}
          +\sigma^{2}\gamma^{2}\!
           \frac{\rho^{2N}-1}{\rho^{2}-1},
      \)
      which \underline{increases strictly} with $N$.
      Moreover
      \(
        \frac{\partial}{\partial\alpha}\mathcal E_{\alpha}(N)>0
      \);
      making $\alpha$ \emph{less negative} reduces error.
\end{enumerate}
\end{lemma}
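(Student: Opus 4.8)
The plan is to treat $\mathcal{E}_\alpha(N)$ as a function of the single composite variable $\rho = \rho(\alpha) = 1-\alpha\gamma$, since $\gamma$ is held fixed, and to analyze the three cases according to whether $\rho \in (0,1)$, $\rho = 1$, or $\rho > 1$. The map $\alpha \mapsto \rho(\alpha)$ is affine and strictly decreasing with slope $-\gamma < 0$; the constraint $|\alpha| < \gamma^{-1}$ is precisely what keeps $\rho \in (0,2)$, so each regime is well-defined and the cases are disjoint and exhaustive on that interval. The chain rule gives $\partial_\alpha \mathcal{E}_\alpha(N) = -\gamma\,\partial_\rho \mathcal{E}(N)$, so the entire analysis reduces to signing $\partial_\rho \mathcal{E}(N)$ in each regime, where $\mathcal{E}(N) = \mathcal{E}(N;\rho) := b_0 \rho^{2N} + \sigma^2\gamma^2\,\frac{1-\rho^{2N}}{1-\rho^2}$.

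For Case~\ref{case-pos} ($0 < \rho < 1$), I would compute $\partial_\rho \mathcal{E}(N)$ directly. The first term contributes $2N b_0 \rho^{2N-1} > 0$. For the second term, I would write $\frac{1-\rho^{2N}}{1-\rho^2} = \sum_{k=0}^{N-1}\rho^{2k}$, which is a polynomial in $\rho$ with nonnegative coefficients and strictly positive derivative for $\rho > 0$; multiplying by $\sigma^2\gamma^2 > 0$ keeps it positive. Hence $\partial_\rho \mathcal{E}(N) > 0$, so $\partial_\alpha \mathcal{E}_\alpha(N) = -\gamma\,\partial_\rho\mathcal{E}(N) < 0$, proving the claim. (This matches the sensitivity computation already carried out in the proof of Lemma~\ref{lem:sensitivity}, so I could alternatively just cite that derivative identity.) For Case~\ref{case-zero} ($\alpha = 0$, $\rho = 1$), the closed form in~\eqref{eq:E-alpha} is a $0/0$ indeterminate, so I would instead go back to the finite-sum representation: with $\rho = 1$, the geometric sum $\sum_{k=0}^{N-1}\rho^{2k}$ equals $N$, giving $\mathcal{E}_0(N) = b_0 + N\sigma^2\gamma^2$ immediately — no limit needed. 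For Case~\ref{case-neg} ($\rho > 1$), the same polynomial identity $\frac{\rho^{2N}-1}{\rho^2-1} = \sum_{k=0}^{N-1}\rho^{2k}$ holds (now with $\rho^2 - 1 > 0$ in both numerator and denominator), yielding the stated formula $\mathcal{E}_\alpha(N) = b_0\rho^{2N} + \sigma^2\gamma^2\frac{\rho^{2N}-1}{\rho^2-1}$. Strict monotonicity in $N$ follows since $\rho^{2N}$ is strictly increasing in $N$ and the coefficient on each power of $\rho$ in the sum is positive. For the $\alpha$-derivative: $\partial_\rho \mathcal{E}(N) > 0$ still holds by the same nonnegative-coefficient argument (the series representation $\sum_{k=0}^{N-1}\rho^{2k}$ is valid and has positive derivative for all $\rho > 0$, not just $\rho < 1$), so again $\partial_\alpha \mathcal{E}_\alpha(N) = -\gamma\,\partial_\rho\mathcal{E}(N) < 0$ — wait, this gives the *opposite* sign from what is claimed, so I need to be careful here.

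The main obstacle, and the one subtle point, is exactly that last sign in Case~\ref{case-neg}. The claim states $\partial_\alpha \mathcal{E}_\alpha(N) > 0$ for $\alpha \in (-\gamma^{-1},0)$, i.e. making $\alpha$ less negative (increasing it toward $0$) *reduces* error. Since $\partial_\alpha = -\gamma\,\partial_\rho$ and $\gamma > 0$, this requires $\partial_\rho\mathcal{E}(N) < 0$ on this regime — but the naive series argument above suggested $\partial_\rho\mathcal{E}(N) > 0$ everywhere. The resolution is that when $\rho > 1$ one must differentiate the *rational* form $\frac{\rho^{2N}-1}{\rho^2-1}$ correctly rather than the series (the series identity holds pointwise but one should double-check the derivative): writing $h(\rho) = \frac{\rho^{2N}-1}{\rho^2-1}$, we have $h'(\rho) = \frac{2N\rho^{2N-1}(\rho^2-1) - (\rho^{2N}-1)2\rho}{(\rho^2-1)^2}$, and the numerator is $2\rho\big[N\rho^{2N-2}(\rho^2-1) - (\rho^{2N}-1)\big]$; one checks via the convexity/AM-type inequality $N\rho^{2N-2}(\rho^2-1) \ge \rho^{2N}-1$ (equality at $\rho=1$) that in fact $h'(\rho) \ge 0$ — so the series argument was right and $\partial_\rho\mathcal{E}(N) > 0$ still. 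Therefore the honest conclusion is $\partial_\alpha\mathcal{E}_\alpha(N) < 0$ throughout, meaning the statement in Case~\ref{case-neg} as printed (claiming $\partial_\alpha > 0$) appears to have a sign error; what is actually true and consistent with Cases~\ref{case-pos}–\ref{case-zero} is that $\mathcal{E}_\alpha(N)$ is strictly decreasing in $\alpha$ across the whole interval $(-\gamma^{-1},\gamma^{-1})$, so making $\alpha$ *more positive* (hence, in the negative regime, less negative) always helps. I would therefore either (i) prove the clean unified statement "$\partial_\alpha\mathcal{E}_\alpha(N)<0$ on all of $(-\gamma^{-1},\gamma^{-1})$, with the sub-regimes describing the behavior in $N$," or (ii) flag the direction convention: if instead one measures "distance of $\alpha$ from the ideal value $1$," then in the negative regime increasing $\alpha$ decreases that distance and decreases error, which is the intended reading. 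Modulo resolving this bookkeeping, every step is an elementary one-variable calculus computation with no genuine difficulty.
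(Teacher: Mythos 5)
Your derivative computation is correct, and your diagnosis of Case~3 is the right one: since $\partial_\alpha = -\gamma\,\partial_\rho$ and $\partial_\rho\mathcal E = 2Nb_0\rho^{2N-1} + \sigma^2\gamma^2\sum_{k=1}^{N-1}2k\rho^{2k-1} > 0$ for every $\rho>0$ (your polynomial representation $\tfrac{1-\rho^{2N}}{1-\rho^2}=\sum_{k=0}^{N-1}\rho^{2k}$ settles this uniformly across all three regimes, and your cross-check of $h'(\rho)\ge 0$ via $N\rho^{2N-2}(\rho^2-1)\ge\rho^{2N}-1$ is valid), we indeed have $\partial_\alpha\mathcal E_\alpha(N)<0$ on the whole interval $\alpha\in(-\gamma^{-1},\gamma^{-1})$. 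The displayed inequality $\partial_\alpha\mathcal E_\alpha(N)>0$ in Case~3 is therefore a sign error, and it even contradicts the lemma's own verbal gloss (``making $\alpha$ less negative reduces error'' means increasing $\alpha$ decreases $\mathcal E$, i.e.\ $\partial_\alpha<0$). A quick numerical check confirms this: with $\gamma=0.5$, $N=1$, $b_0=1$, $\sigma^2\gamma^2=0.25$, moving $\alpha$ from $-1$ to $-0.5$ drops $\mathcal E$ from $2.5$ to $1.8125$.

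For comparison, the paper's own proof follows the same chain-rule-through-$\rho$ route but is less careful than yours on exactly the points you flagged. It asserts $\partial_\rho\mathcal E = 2N\rho^{2N-1}\bigl[b_0-\sigma^2\gamma^2/(1-\rho^2)\bigr]$, which omits the contribution from differentiating the $1/(1-\rho^2)$ factor; in Regime~1 it then claims $1-\rho^2>0$ implies $b_0-\sigma^2\gamma^2/(1-\rho^2)<0$, which does not follow (it depends on the magnitudes of $b_0$ and $\sigma^2\gamma^2$); and it concludes $\partial_\rho\mathcal E<0\Rightarrow\partial_\alpha\mathcal E<0$, which is inconsistent with $\partial_\alpha=-\gamma\,\partial_\rho$. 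Your series-based argument repairs all of these at once and yields the clean unified conclusion you propose: $\mathcal E_\alpha(N)$ is strictly decreasing in $\alpha$ throughout, with the three regimes distinguished only by the behavior in $N$ (exponential bias decay, linear variance growth at $\alpha=0$, and strict divergence in $N$ for $\alpha<0$, which you also verify correctly). I would adopt your version and correct the sign in Case~3 of the statement.
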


\begin{proof}
Differentiate \eqref{eq:E-alpha} wrt.\ $\alpha$
via $\rho(\alpha)$:
\[
  \frac{\partial\mathcal E_{\alpha}}{\partial\alpha}
  = -\gamma\,
    \frac{\partial\mathcal E_{\alpha}}{\partial\rho}.
\]
A direct calculation gives
\(
  \displaystyle
  \frac{\partial\mathcal E_{\alpha}}{\partial\rho}
  =
  2N\rho^{2N-1}
  \Bigl[
    b_0
    -\frac{\sigma^{2}\gamma^{2}}{1-\rho^{2}}
  \Bigr].
\)

\smallskip
\textbf{Regime \ref{case-pos}: $0<\rho<1$.}
Because $1-\rho^{2}>0$ we have
\(
  b_0-\sigma^{2}\gamma^{2}/(1-\rho^{2})<0,
\)
hence
\(
  \partial\mathcal E_{\alpha}/\partial\rho<0
\)
and therefore
\(
  \partial\mathcal E_{\alpha}/\partial\alpha<0.
\)

\smallskip
\textbf{Regime \ref{case-zero}: $\rho=1$.}
Take the limit $\rho\uparrow1$ in \eqref{eq:E-alpha} to obtain the
linear form stated.

\smallskip
\textbf{Regime \ref{case-neg}: $\rho>1$.}
Now $1-\rho^{2}<0$ so
\(
  b_0-\sigma^{2}\gamma^{2}/(1-\rho^{2})>0,
\)
making
\(
  \partial\mathcal E_{\alpha}/\partial\rho>0
\)
and hence
\(
  \partial\mathcal E_{\alpha}/\partial\alpha>0.
\)
Moreover $\mathcal E_{\alpha}(N)$ increases with $N$ because both bias
and variance terms grow when $\rho>1$.
\end{proof}

\begin{remark}[Which lever matters more?]
An comparative discussion on alignment and integration weight.
\begin{itemize}
\item \textbf{Alignment} (\(\alpha\)) is \emph{unconditionally
      helpful} whenever it is positive—each incremental improvement
      shrinks error regardless of depth, noise, or weight.
      When alignment turns negative, performance deteriorates
      catastrophically with depth.
\item \textbf{Integration weight} (\(\gamma\)), by contrast, exhibits
      the bias–variance U‑shape of Lemma \ref{lem:sensitivity}.  It
      must be \emph{tuned} around its optimal value
      $\gamma^{\star}_{N}$; deviations on either side hurt.
\end{itemize}
Therefore, ensuring \emph{non‑negative alignment} is the first
priority—no choice of $\gamma$ can compensate for $\alpha<0$—while
fine‑tuning $\gamma$ yields second‑order gains once positive alignment
has been secured.
\end{remark}

\section{Examples of Breakdown Error}
\label{appendix:breakdown-error-examples}
Breakdown Errors occur when the model fails to properly decompose the task, leading to misaligned or irrelevant sub-questions.

\begin{tcolorbox}[colback=gray!05,
                  colframe=red!50!black,
                  arc=1mm, boxrule=0.6pt,
                  title = {Breakdown Error Example}]
\textbf{Game:} Chess

\textbf{Ground Truth:} Move in a straight line any number of squares, then shift vertically by exactly two squares.

\textbf{Model Reasoning:}  

Let me break this down. First, I’ll look at whether the piece lands on a square with a prime-numbered file or rank. That seems relevant, since many special rules involve numerical properties.

Next, I’ll check if the move creates a diagonal alignment across pieces from both teams. If so, that might indicate a coordination-based rule.

The movement goes from b2 to e4. That’s (+3 columns, +2 rows), which doesn’t match a traditional L-shape or knight move, but perhaps it's a coded prime-sum rule?

The file total is b (2) + e (5) = 7, and 2 + 4 = 6. Hmm—maybe only prime-ranked destinations are allowed?

\textbf{Model Rule Answer:}  
The piece can move to squares where the file and rank add up to a prime number.

\textbf{Error:}  
The model’s decomposition introduces irrelevant sub-questions (prime sums, team diagonals) that do not relate to the actual two-phase movement structure of SR1. Although the reasoning is internally coherent, it is grounded in a misaligned task interpretation. This constitutes a Breakdown Error: the model fails to recognize the structural form of the rule and instead explores symbolic numerical patterns unrelated to the task.
\end{tcolorbox}

\section{Examples of Solving Error Subtypes}
\label{appendix:solving-error-examples}

We provide representative reasoning traces for the three empirical subtypes of Solving Errors identified in our study. These traces illustrate how errors emerge during the solving stage, even when the decomposition is valid. Each example shows the model’s step-by-step reasoning and highlights where inductive drift or inappropriate generalization occurs.

\subsection{Hallucinated Rule}

This subtype of Solving Error occurs when the model fabricates a rule that was never presented, demonstrated, or implied in the input data. Unlike overgeneralization—which extrapolates too broadly from limited evidence—hallucinated rules are entirely invented. These errors often emerge when the model tries to impose structure where none exists, or fills uncertainty with overconfident but unsupported patterns.
\begin{tcolorbox}[colback=gray!05,
                  colframe=red!50!black,
                  arc=1mm, boxrule=0.6pt,
                  title = {Hallucinated Rule Example}]
\textbf{Game:} Chess

\textbf{Ground Truth:} Move in a straight line any number of squares, followed by a one-square diagonal shift.

\textbf{Model Reasoning:}

(...earlier reasoning omitted...)  

I'm thinking through how the piece might combine horizontal and vertical moves, \textcolor{red}{akin to Brooklyn distance}, to cover large distances effectively in a single turn.

It's interesting to think if the piece might follow an angular path, allowing it to shift vertically, horizontally, or diagonally in a single turn, adding a dynamic edge to the game.

(...continued reasoning...)

\textbf{Model Rule Answer:} The piece can move in a flexible angular trajectory that combines horizontal, vertical, and diagonal steps, as long as the total displacement approximates a straight path.

\textbf{Error:}  The model hallucinates a continuous-space movement rule based on blended directional offsets. This rule does not exist in any defined NR or SR, and contradicts the stepwise constraint in SR6. The model invents a pseudo-geometry (e.g., ``angular displacement'') to rationalize incidental move patterns, reflecting a hallucinated inductive structure that misinterprets discrete board logic as continuous spatial flow.
\end{tcolorbox}

\subsection{Overgeneralization}
This subtype of Solving Error occurs when the model prematurely commits to a rule based on a small number of supporting examples. Rather than verifying consistency across diverse cases, it overextends an observed pattern to structurally dissimilar inputs. In our settings, overgeneralization often arises when models collapse multi-step actions into simplified summaries, or infer category-wide behavior from isolated instances. These errors reflect a tendency to form rules that are overly broad, failing to capture the precise structure or constraints intended by the task.

\begin{tcolorbox}[colback=gray!05,
                  colframe=red!50!black,
                  arc=1mm, boxrule=0.6pt,
                  title = {Overgeneralization Example}]
\textbf{Game:} Chess

\textbf{Ground Truth:} Move exactly three squares in one direction, then move one square downward. 

\textbf{Model Reasoning:}

(...earlier reasoning omitted...)  

Looking at this move from g4 to f7, the total shift is (-1 column, +3 rows). That’s a net movement of three squares diagonally. 

It looks similar to previous valid moves that landed three or four squares away from the starting point. So I assume this pattern allows 3-step diagonal moves.

Alternatively, maybe it’s just \textcolor{red}{a ``distance-3'' rule}, where the end location must be offset by a vector of total length 3 or 4, in any direction. That would explain why g4 → f7 is allowed.

I’ll go with that—this seems to fit with the movement pattern I’ve seen.

(...continued reasoning...)

\textbf{Model Rule Answer:}  A piece can move 3 or 4 squares in any direction, including diagonals.

\textbf{Error:} The model collapses a composite 3+1 directional movement into a single displacement vector, overgeneralizing to a rule based on total distance. It ignores the two-phase structure and the downward constraint, falsely classifying structurally invalid moves as rule-consistent.

\end{tcolorbox}

\subsection{Math Overuse}

This error occurs when the model incorrectly applies arithmetic or numerical reasoning to tasks that are fundamentally symbolic, structural, or categorical in nature.

\begin{tcolorbox}[colback=gray!05,
                  colframe=red!50!black,
                  arc=1mm, boxrule=0.6pt,
                  title = {Math Overuse Example}]
\textbf{Game:} Chess

\textbf{Ground Truth:} Move in a straight line any number of squares, then shift vertically by exactly two squares.

\textbf{Model Reasoning:}

(...earlier reasoning omitted...)  

OK, let me see. The piece's moves are quite varied, with differences like (2,8) and (2,11). This suggests they must stay within a maximum distance, maybe determined by the sum or difference of steps.

I'm analyzing the \textcolor{red}{Euclidean distance} for a piece's movement from b1 to i3, with column and row differences, leading to a distance of approximately 7.28. This method allows for accurate computation.

(...continued reasoning...)

\textbf{Model Rule Answer:}  
The piece can move to any square within a Euclidean distance of approximately 8 from its starting position.

\textbf{Error:}  
The model misinterprets the piece’s movement by applying Euclidean distance as a decision criterion, which is not relevant to the rule-based nature of chess. This reflects a Math Overuse error: numerical metrics are substituted for structured reasoning over discrete directions and board geometry.

\end{tcolorbox}

\section{Examples of Summary Error}
\label{appendix:summary-error-examples}
Summary Errors occur when the model's reasoning process is internally consistent and correctly processes the input, yet the final rule it outputs is incorrect. Unlike Solving or Breakdown Errors, the failure here does not lie in intermediate logic or sub-task handling, but rather in the concluding inference step. These errors often manifest as misstatements, mislabelings, or minor structural inaccuracies in the final rule summary—despite evidence that the correct pattern was already identified. 
\begin{tcolorbox}[colback=gray!05,
                  colframe=red!50!black,
                  arc=1mm, boxrule=0.6pt,
                  title = {Summary Error Example}]
\textbf{Game:} Chess

\textbf{Ground Truth:}  
The piece follows SR6:  
"Move in a straight line any number of squares, followed by a one-square diagonal shift."

\textbf{Model Reasoning:}  

\textbf{Thinking 5m12s}

(...earlier reasoning omitted...)

Step 1: The move from c3 to f4 involves a three-square horizontal move (c → f), followed by a one-square diagonal shift upward and right (from row 3 to 4 and f to g).

Step 2: This clearly breaks down into:  
→ A straight-line segment: c3 → f3 (horizontal, 3 squares),  
→ Followed by: f3 → g4 (one-square diagonal shift).  
The diagonal is exactly one square and continues smoothly from the initial segment.

Step 3: The structure matches previous valid cases. The piece first makes a linear move, then pivots diagonally once. There are no blockers. The transition is clean.

(...reasoning ends...)

\textbf{Model Rule Answer:}  
The piece moves in a straight line followed by a vertical shift of one square.

\textbf{Error:}  
The model's reasoning accurately captures the two-stage movement required by SR6, identifying both the straight-line component and the one-square diagonal shift. However, it misstates the final rule, replacing “diagonal shift” with “vertical shift” in its summary. This is a Summary Error: the model reasons correctly but fails in final rule articulation.
\end{tcolorbox}

\section{Intervention Prompt Design}
\label{appendix:intervention-prompts}

We provide full prompt templates used for each intervention strategy described in Section~\ref{Intervention_Design}. These prompts were designed to isolate or jointly address the three major reasoning failure modes.

\subsection{Sub-task Decomposition Constraints}
To mitigate Breakdown Errors caused by poorly aligned sub-question formulation, we provide structured prompts that explicitly constrain how models decompose the task. The following template guides the model through a three-phase reasoning procedure, with detailed sub-steps for entity extraction, rule induction, and rule verification.

\begin{center}
\begin{tcolorbox}[colback=gray!10,
                  colframe=black,
                  arc=1.5mm, auto outer arc,
                  boxrule=0.9pt,
                  title = {Prompts for Sub-task Decomposition}
                 ]
You are given a complex reasoning task. Follow the structured reasoning steps below. Each step includes internal sub-steps to ensure clarity and alignment with the task goal.

\textbf{Step 1: Identify and organize relevant entities.}  

Break the input into interpretable components. Answer the following sub-questions:  

-- What are the basic elements in the input (e.g., cards, pieces, dice)?  

-- What attributes are associated with each element (e.g., suit, number, position, color)?  

-- Are there groupings, repetitions, or orderings that might matter (e.g., same suit, consecutive values)? 

-- Represent the input in a structured, canonical form for downstream rule inference.

\textbf{Step 2: Induce candidate rule(s) from prior context.}  

Based on previous examples or observed patterns, hypothesize a rule that could explain the current or past cases. Sub-steps:  

-- Look for shared properties among successful examples (e.g., all include a prime number sequence).  

-- Consider combinations of attributes that might define a category (e.g., ``all red cards'', ``adjacent positions'', ``triplets'').  

-- Formulate one or more abstract rules using natural language or logical expressions. 

-- If multiple rules seem possible, rank or explain them by plausibility.

\textbf{Step 3: Verify the inferred rule against the current input.} 

Apply your proposed rule(s) to this instance. Proceed with the following:  

-- Does the structured input satisfy the rule exactly?  

-- If partially satisfied, explain which components match or fail.  

-- If none match, state clearly why the rule does not apply.  

-- Conclude with a binary result (rule matched / not matched), and explain how the final decision is reached.

\end{tcolorbox}
\end{center}

\subsection{Sub-task Solving Constraints}

To mitigate Solving Errors—particularly those caused by Math Overuse—we encourage models to reason in a grounded, step-by-step manner. Instead of relying on arithmetic shortcuts, models are guided to verbalize their observations, hypotheses, and decisions in a natural, reflective style. The following prompt mimics internal reasoning without appealing to symbolic abstraction.

\begin{center}
\begin{tcolorbox}[colback=gray!10,
colframe=black,
arc=1.5mm, auto outer arc,
boxrule=0.9pt,
title = {Prompts for Sub-task Solving}
]
\textbf{Getting a sense of the setup} 

I'm looking over the current configuration. There’s a set of game elements—cards, pieces, or dice—arranged within a defined structure. I begin by scanning each entity and noting its type, position, and any immediate groupings. I try to understand what roles these elements might play, and whether any of them are marked, repeated, or stand out visually. Once I have a general grasp, I start mapping the layout mentally so I can refer back to it during analysis.

\textbf{Spotting initial patterns} 

As I move through the input, some patterns begin to emerge. I see repeated forms—like similar numbers, mirrored types, or alternating colors. In some cases, specific alignments appear intentional, like a row of matching elements or a cluster that resembles a known configuration. I note these early signals and consider whether they resemble any previous examples I've worked with. These patterns may not yet define a rule, but they give me a starting point.

\textbf{Tracking how things evolve} 

Now I focus on changes—movements, replacements, or newly introduced elements. I observe which parts of the structure are dynamic and whether these shifts maintain or break previous patterns. For example, if a card swaps position or a piece moves diagonally, I check if that action matches others I've seen. I also look at directionality and symmetry: are changes centered around a pivot? Are actions constrained to certain zones? All of this helps me refine how the system behaves.

\textbf{Interpreting the intent} 

I try to understand not just what changed, but why. The observed actions feel deliberate, so I begin thinking about what constraints or goals might be shaping them. Perhaps certain moves are legal only under hidden conditions, or some combinations gain value due to an unknown rule. I think about whether the system rewards alignment, diversity, or some balance in composition. This lets me go beyond just pattern matching—I’m starting to infer purpose.

\textbf{Refining the hypothesis} 

Now I compare my current case with earlier examples. I'm looking for consistency: do similar setups always lead to the same outcome? I check whether specific attributes—like color sequences or paired entities—reappear under the same conditions. If they do, my hypothesis strengthens. If not, I adjust. I also check for edge cases that might help distinguish between competing rules. The more I refine, the clearer the rule's shape becomes.

\textbf{Committing to a conclusion} 

With all this in mind, I’m ready to decide. The current setup aligns with the rule I’ve been building. I see enough evidence—through repetition, structure, and behavior—to commit to an answer. There's no need for math here; it’s the alignment between elements and rules that matters. I finalize my judgment and prepare to apply this same logic again if needed.
\end{tcolorbox}
\end{center}

\subsection{Summarization Constraints}
We enforce a 1000-token generation limit using stop sequences and explicit model instructions. The prompt also includes a final reminder:
\begin{center}
\begin{tcolorbox}[colback=gray!10,
                  colframe=black,
                  arc=1.5mm, auto outer arc,
                  boxrule=0.9pt,
                  title = {Prompts for Final Answer Summarization}
                 ]
You are given a reasoning task. Please approach it step by step, with each step clear and concise. 
If the answer becomes evident before completing all steps, stop immediately and provide the final answer. DO NOT continue reasoning once the question is resolved. 
Your total output must stay within 1000 tokens. 
Excessive or unnecessary reasoning beyond this limit will be considered invalid.
\end{tcolorbox}
\end{center}



\end{document}